%% file: arxiv.tex
\documentclass[11pt]{article}
\usepackage[utf8]{inputenc}
\usepackage[T1]{fontenc}
\usepackage{amsmath,amssymb,amsthm}
\usepackage{graphicx}
\usepackage{booktabs}
\usepackage{longtable}
\usepackage{authblk}
\usepackage[numbers,sort&compress]{natbib}
\usepackage[hidelinks]{hyperref}
\usepackage{xcolor}
\graphicspath{{figures/}}

\newcommand{\ER}{\mathrm{ER}}
\newcommand{\ERn}{\ER_{\mathrm{naive}}}
\newcommand{\ERs}{\ER_{\mathrm{split}}}
\newcommand{\ERt}{\ER_{\mathrm{truth}}}
\newcommand{\IG}{I(G)}
\newcommand{\VG}{V(G)}
\newcommand{\Vinf}{V_\infty}
\newcommand{\E}{\mathbb{E}}
\newcommand{\R}{\mathbb{R}}
\renewcommand{\P}{\mathbb{P}}
\newcommand{\eps}{\varepsilon_\infty}

\theoremstyle{plain}
\newtheorem*{propzero}{Proposition 0}
\newtheorem*{propzeroprime}{Proposition 0$'$}
\newtheorem*{propone}{Proposition 1}
\newtheorem{theorem}{Theorem}
\newtheorem*{lemmazL}{Lemma 0L}
\newtheorem*{lemmaC}{Lemma C}
\newtheorem*{principleD}{Principle D}
\theoremstyle{definition}
\newtheorem{definition}{Definition}
\newtheorem{assumption}{Assumption}

\title{\bfseries Synthetic minority data is redundant or invalid:\\
a data-dependent validity theory and a de-biased test}
\author[1]{Ahmad B. Hassanat\thanks{Corresponding author: ahmad.hassanat@gmail.com}}
\author[1]{Ahmad S. Tarawneh}
\author[2]{Ghada A. Altarawneh}
\affil[1]{Faculty of Information Technology, Mutah University, Karak, Jordan}
\affil[2]{Department of Accounting, Mutah University, Karak, Jordan}
\date{}

\begin{document}

\maketitle

\begin{abstract}
\noindent
For two decades, the standard remedy for class-imbalanced learning has been to
fabricate synthetic minority examples, and the standard evidence of their validity has
been a check that cannot fail: synthetic points are scored against the very data that
generated them. We de-bias the check. Validity becomes a population quantity---the
probability that a synthetic point truly belongs to the minority class---with a
consistent estimator that scores synthetic points against withheld real data. Where
held-out ground truth is available, the classical test underestimates true invalidity
in 96--99\% of method-by-imbalance-ratio cells, while the de-biased estimator tracks it
closely. We prove validity
is a property of the \emph{data}, not the method: class overlap sets an invalidity floor
no faithful generator escapes, making oversampling redundant where classes separate and
invalid where they overlap. Across 91 methods, three classifiers, and datasets spanning
medicine and finance---including a generator engineered to pass the classical
check---none clears both bars: gains over the best trivial baseline are noise-thin
(median below 0.01 F1, a decision threshold's reach), and most damage calibration. We release the audit as a \texttt{pip}-installable test and flip the burden
of proof: synthetic minority data must now demonstrate, on the data at hand, both
validity and information gain.
\end{abstract}

\section{Introduction}

A hospital deploys a model to flag a rare, lethal post-surgical complication. The
complication occurs in one patient in fifty, so before training, the team reaches for
the default remedy for class imbalance: a synthetic oversampler that manufactures
artificial ``patients'' until the classes balance. The practice is ubiquitous---SMOTE
\citep{chawla2002smote} and its descendants number well over a hundred variants, cited
across medicine, fraud detection, genomics and ecology \citep{fernandez2018smote15}---and
it feels safe, because the synthetic points can be checked: score each one against the
training data, confirm that its nearest real neighbours are minority patients, and
certify the augmentation as valid.

The trouble is that this certificate is printed by the party it certifies. The synthetic
points are interpolations \emph{between} real minority cases, so their nearest neighbours
in the training sample are, with high probability, their own parents---and a parent votes
``minority'' by construction. The check is not measuring whether synthetic points
represent the minority \emph{population}; it is measuring whether interpolation lies near
its own endpoints, which is a tautology. We show this parent leakage is a systematic,
sample-size-independent bias (Lemma~0L); beneath it lies an epistemic limit. In the worst case
no statistic of the training sample alone can falsify the claim that a synthetic point is
minority---a two-point Le Cam construction exhibits a data-independent generator unfalsifiable
at any sample size (Proposition~0). That is an existence result about worst-case generators;
what convicts \emph{interpolators} concretely is the parent-leakage bias of Lemma~0L. Either
way, synthetic data cannot certify itself: falsification needs withheld reference data.

The repair is as old as statistics: withhold. Define validity as a population
quantity---$\VG$, the expected probability that a point drawn from generator $G$ truly
belongs to the minority class---and estimate its complement by scoring synthetic points
against real data the generator has never seen. The resulting estimator, $\ERs$, is
consistent (Proposition~1), costs one line of protocol, and changes the verdict
everywhere. On three clinical datasets from our own earlier studies, the classical check
certifies 30, 14 and 19 of ninety-one oversamplers as valid; the de-biased check
certifies \textbf{none of the three}. And where reality itself is available to consult---datasets rich
enough that we can hide real minority cases and treat them as ground truth---the
de-biased estimator tracks true invalidity closely (dataset-level correlation 0.79--0.93;
per-ratio cells 0.47--0.98, weakest near class balance; mean error 0.03--0.05), while the
classical test underestimates it in \textbf{96--99\% of cells}
(method $\times$ imbalance-ratio combinations), by
a margin that grows exactly where the stakes do: as imbalance becomes extreme
(Fig.~\ref{fig:schematic}, Fig.~\ref{fig:invalidity}).

\emph{What ``invalid'' means.} The word carries a precise, deliberately narrow sense: a
synthetic point is invalid when it falls in the Bayes-majority region \emph{at the deployment
prior}, however faithfully it mimics a real minority case---a statement about the operating
class balance, not about realism. This is exactly why the criterion bites, and why it convicts
even real data: as an identity control makes concrete (\S\ref{sec:datadriven}), \emph{verbatim
real minority}, scored the same way, is itself judged invalid at the same floor rate, because
under class overlap the minority-typical region is genuinely majority-dominated. The
invalidity is a fact about the data's geometry and prior, not an artefact of fabrication---and
it is precisely the cost a classifier pays when overlap-region synthetic points are trusted as
minority.

Why does a fabricated patient look valid to the sample yet prove majority in truth?
Because validity was never the method's property to lose---it is the data's property to
grant. Our central result (Principle~D with Theorem~1) decomposes any generator's
invalidity into a \emph{data-driven floor} and a \emph{method-driven excess}:
$1-\VG = [1-\Vinf(P)] + [\Vinf(P)-\VG]$. The floor is set by the overlap of the true
class distributions: where minority and majority intermingle, a fraction of genuinely
minority-typical space is nonetheless majority-dominated, and any generator faithful to
the minority distribution---SMOTE in its large-sample limit \citep{sakho2024rebalancing},
or a perfect sampler of the minority density itself---inherits that fraction as
irreducible invalidity. A method escapes the floor only by \emph{abandoning} the overlap
region, which forfeits faithfulness and, as we show, any information gain. The dilemma
has two horns and no exit: where classes separate the floor vanishes but a plain
classifier already succeeds, so oversampling is \emph{redundant}; where classes overlap
the classifier needs help exactly where synthetic points are \emph{invalid}. In a
controlled sweep of class separation, the trajectory of every oversampler hugs these two
axes and never enters the valid-and-informative corner (Fig.~\ref{fig:horns}).

The prescription follows from the accounting. At the model level, appending synthetic
minority of density $g$ is \emph{exactly} class-weighting plus a geometric error term
$r(x)$ that vanishes only when $g$ equals the true minority density (Lemma~C)---so
everything recoverable about oversampling is available for free from a class weight or a
decision threshold, and everything else is unrecoverable distortion that no scalar
recalibration can remove. The empirical record agrees, and it is
classifier-independent: across 91 methods, logistic regression, XGBoost and LightGBM,
honest protocols show median F1 gains over the best trivial baseline near zero, PR-AUC and
ROC-AUC unmoved, and calibration \emph{worsened} in 77--93\% of cases. The apparent wins
reported for two decades are, on an honest test, threshold shifts in
disguise---reproducible for free, at no calibration cost, by moving the threshold
(Fig.~\ref{fig:illusion}).

We close the case from both ends. \emph{Data-side}: on credit-card fraud---a new domain,
a new feature geometry, imbalance up to 450:1---the same gap, the same tracking of
held-out truth, the same redundancy (Fig.~\ref{fig:fraud}). \emph{Method-side}: we audit
SMOTEFUNA \citep{tarawneh2020smotefuna}, an oversampler from our own group whose
acceptance rule \textbf{is} the classical validity check, built into the generator. It
passes that check perfectly---$\ERn=0.000$ by construction---and the de-biased test finds
up to 88\% of its certified ``minority'' to be majority in truth
(Fig.~\ref{fig:smotefuna}). A criterion that a method can saturate while failing is a
gamed criterion \citep{strathern1997improving}; the withheld-data estimator is the
version that cannot be gamed \citep{adebayo2018sanity}.

This paper therefore asserts a standard, not a prohibition: a synthetic-minority method
earns its place on a given dataset only by demonstrating \textbf{validity}
($\ERs\to 0$, against withheld data) \textbf{and information gain} (performance no trivial
baseline matches), and we release \texttt{resample-audit}, a documented,
\texttt{pip}-installable instrument that reports both for any resampler on any dataset in
one call. Because validity is data-dependent, it cannot be assumed to transfer from
benchmark to deployment---it must be measured in place. The burden of proof flips from
those who question synthetic minority data to those who deploy it.

Our contributions: \textbf{(i)} a population validity functional $\VG$ with a consistent,
de-biased estimator, validated against held-out ground truth (\S\ref{sec:functional});
\textbf{(ii)} an impossibility account---unfalsifiability of self-certification (Prop.~0),
the data-driven floor $1-\Vinf$ (Thm.~1, Principle~D), and the contamination identity
(Lemma~C) unifying validity, redundancy and calibration harm in one frame that also
explains prior findings \citep{sakho2024rebalancing,elreedy2024smote,lyu2025bias,goorbergh2022harm,ahmad2025concentration};
\textbf{(iii)} the largest honest audit to date---91 methods, three classifiers, medical,
financial and high-dimensional data, plus deep generative oversamplers---with both
capstones (a new domain; a check-gaming method) (\S\ref{sec:infogain}--\S\ref{sec:capstones});
\textbf{(iv)} a released, DOI-archived diagnostic tool and full reproduction pack
(\S\ref{sec:tool}). Earlier instruments from our group
\citep{tarawneh2022stop,hassanat2023jeopardy} already withheld the majority side of the
reference; this work completes the de-biasing on the minority side---a continuation, not
a recantation, of that program.

\begin{figure}[t]
\centering
\includegraphics[width=\textwidth]{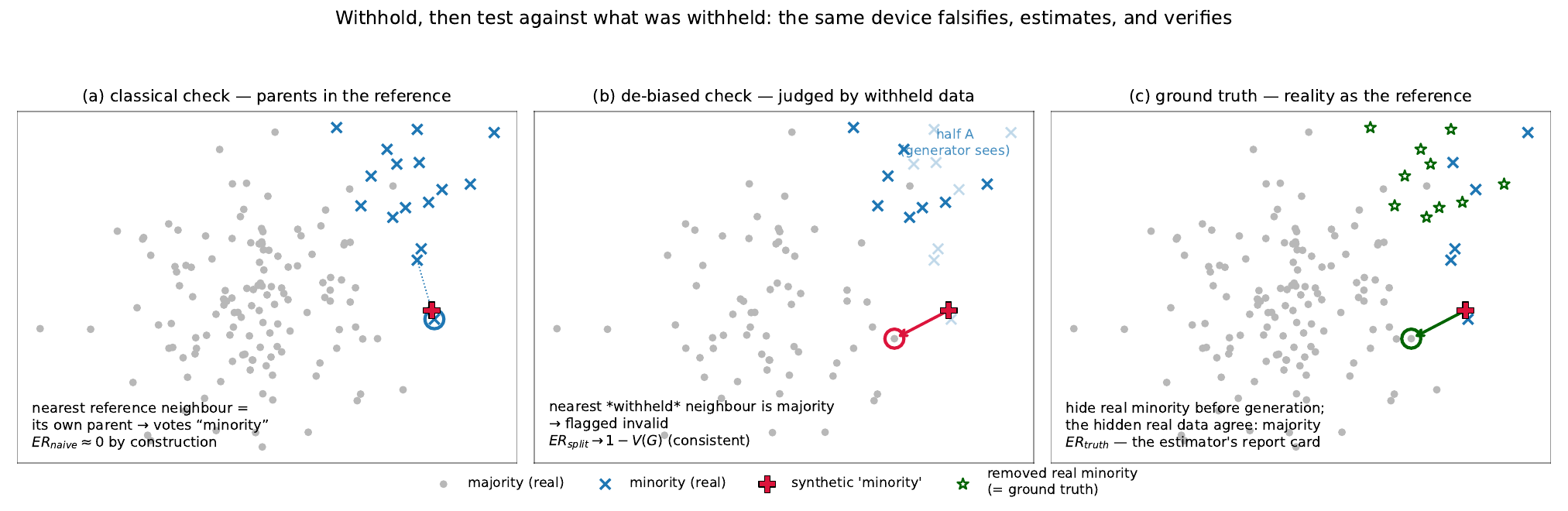}
\caption{\textbf{Withhold, then test against what was withheld.}
(a)~The classical check retains the synthetic point's own parents in the reference, so
its nearest real neighbour is a parent that votes ``minority'' by construction
($\ERn\approx 0$). (b)~The de-biased check generates on one half of each class and scores
against the withheld half, where the nearest real neighbour is majority ($\ERs$ is
consistent for $1-\VG$). (c)~The ground-truth design hides real minority before
generation and scores against it. Geometry is computed, not drawn.}
\label{fig:schematic}
\end{figure}

\section{A validity functional and a de-biased estimator}
\label{sec:functional}

\paragraph{Validity as a population claim.}
Let the data law $P$ have minority and majority class-conditional densities $f_1,f_0$
with priors $\pi_1,\pi_0$, and let $\eta(x)=P(Y{=}1\mid X{=}x)$ be the true posterior. A
generator $G$ that emits synthetic ``minority'' points $\tilde X\sim g$ makes a
population claim about each: \emph{this point is more likely minority than majority},
i.e.\ $\eta(\tilde X)\ge\tfrac12$. We define the \textbf{synthetic validity} of $G$ as
$\VG = \mathbb{E}_{\tilde X\sim g}[\eta(\tilde X)]$ (Definition~1), and its
\textbf{information gain} $\IG$ as the improvement in downstream performance over the best
\emph{trivial} alternative---no resampling, class-weighting, or a tuned decision
threshold---under an honest protocol that resamples only training data (Definition~2).
The standard this paper asserts is a conjunction: a generator earns its place on a dataset
only if $\VG\to 1$ \emph{and} $\IG>0$ there.

\paragraph{Why the classical check cannot fail (Fig.~\ref{fig:schematic}).}
The field's default check computes, for each synthetic point, the class of its nearest
real neighbour \emph{in the training sample that generated it}, and reports the
majority-vote rate---$\ERn$. Interpolating generators place each synthetic point between
two of its own minority parents, which remain in the reference. We show (Lemma~0L) that
for SMOTE-type generators the nearest reference neighbour of a synthetic point is one of
its own parents with probability bounded below \emph{uniformly in the sample size}: parent
and independent-reference distances shrink at the same $n^{-1/d}$ rate, so the leakage
never washes out, and a parent votes ``minority'' by construction. The result is a
non-vanishing downward bias---$\ERn$ converges not to $1-\VG$ but to a quantity smaller by a
constant factor. Empirically the effect dwarfs this worst-case bound---$\ERn$ near $0.000$
against $\ERs$ near $0.8$ on overlapping clinical data---so Lemma~0L, which only certifies a
positive capture probability $p^\star>0$, understates the bias it names. This is bias, not
noise; no amount of data repairs it.

The deeper obstruction is informational. Proposition~0 formalizes it as a two-point
argument in the style of Le Cam \citep{tsybakov2009nonparametric}: for any statistic $T$
computed from the training sample and the synthetic points, there exist two populations,
identical except on a region the sample is unlikely to have probed, whose induced
distributions of $T$ are within total variation $n\varepsilon$ of each other, yet whose
true validities differ maximally. This is a worst-case existence result---the separating
generator is data-independent, so it shows self-certification is unfalsifiable \emph{in
principle}; the interpolator-specific conviction is instead the parent-leakage bias of
Lemma~0L. Either way, \textbf{synthetic data cannot certify itself}---falsifiability requires reference data the generator has
never seen, and scales with the amount of withheld real data and nothing else
(Proposition~0$'$).

\paragraph{The de-biased estimator.}
Split each class at random into halves A and B; run the generator on A only; score each
synthetic point by the class of its nearest neighbour \emph{in B}---real data the
generator never saw---and average over splits:
$\ERs = \Pr(\text{nearest B-neighbour of }\tilde X\text{ is majority})$. Proposition~1
shows $\ERs$ is consistent for $1-\VG$: the nearest-neighbour vote converges to a
Bernoulli draw of $\eta(\tilde X)$ \citep{coverhart1967nn}, so its mean converges to
$\mathbb{E}[1-\eta(\tilde X)]$. We report Wilson intervals throughout, use a scale-free
distance by default \citep{hassanat2014dimensionality}, and provide a $k$-nearest-neighbour
variant for high-dimensional regimes (\S\ref{sec:scope}). Withholding, not the particular
vote, is the substance: our earlier instruments
\citep{tarawneh2022stop,hassanat2023jeopardy} already hid a share of the \emph{majority}
from the generator; $\ERs$ completes the same de-biasing on the minority side, and the
ground-truth design below closes the triangle by hiding minority. One epistemic
device---withhold, then test against what was withheld---in three roles: falsify,
estimate, verify.

\paragraph{Verification against reality.}
An estimator of a population quantity should be checked against the population, and
imbalance offers a natural way: take a dataset rich in real minority, \emph{remove} most
of it to manufacture the target imbalance, run the generators on the depleted data, and
score their synthetic points against the removed---genuinely real, genuinely
unseen---minority. Call the resulting oracle $\ERt$. Across four datasets and 1{,}040
method-by-imbalance cells (Fig.~\ref{fig:invalidity}; fraud in \S\ref{sec:capstones}):

\begin{table}[h]
\centering
\small
\setlength{\tabcolsep}{5pt}
\begin{tabular}{l cc cc c}
\toprule
& \multicolumn{2}{c}{MAE vs truth} & \multicolumn{2}{c}{corr.\ with truth} & classical \\
\cmidrule(lr){2-3}\cmidrule(lr){4-5}
dataset (cells) & classical & de-biased & classical & de-biased & underestimates \\
\midrule
spambase (383)          & 0.318 & \textbf{0.052} & 0.67 & \textbf{0.91} & 96\% \\
MagicTelescope (310)    & 0.357 & \textbf{0.039} & 0.43 & \textbf{0.93} & 99\% \\
Pima diabetes (159)     & 0.592 & \textbf{0.036} & 0.21 & \textbf{0.79} & 97\% \\
credit-card fraud (188) & 0.133 & \textbf{0.037} & 0.79 & \textbf{0.90} & 96\% \\
\bottomrule
\end{tabular}
\caption{The de-biased estimator ($\ERs$) tracks held-out real invalidity; the classical
test ($\ERn$) underestimates it in 96--99\% of cells. MAE and correlation are against the
oracle $\ERt$.}
\label{tab:groundtruth}
\end{table}

The de-biased estimator tracks held-out reality to within a few hundredths and ranks
methods almost as truth does; the classical test's shortfall \emph{grows with the
imbalance ratio} (Fig.~\ref{fig:invalidity})---blindness is worst exactly where
oversampling is reached for. A closed-form control confirms the estimator itself is not
the artifact: in a two-Gaussian model where $1-\Vinf$ is analytic, $\ERs$ tracks the exact
curve within $\pm0.011$ at every imbalance from 1:1 to 100:1 and at every reference size,
while $\ERn$ undershoots by a margin growing to 0.22 (Fig.~\ref{fig:rho}).

\paragraph{The verdict on real data.}
Re-auditing the clinical datasets of our earlier medical study under both protocols: the
classical check certifies 30 of 91 oversamplers on Pima diabetes, 14 on thoracic surgery,
19 on Framingham; the de-biased check certifies \textbf{0, 0 and 0}, with median
invalidity 0.49, 0.83 and 0.79. Methods the 2022 protocol scores as \emph{perfect}
($\ERn=0.000$) carry de-biased invalidity of 0.58--0.86 on clinical data. The gap between
the two columns is Lemma~0L made visible: the invalidity the field's standard check
structurally cannot see.

\section{Validity is data-driven}
\label{sec:datadriven}

\paragraph{The decomposition.}
Write $\Vinf(P)=\mathbb{E}_{X\sim f_1}[\eta(X)]$ for the validity of a \emph{perfect}
minority sampler---a generator that draws from the true minority density itself. Then any
generator's invalidity splits identically into two parts (Principle~D):
\begin{equation}
1-\VG \;=\; \underbrace{[\,1-\Vinf(P)\,]}_{\text{data-driven floor}}
\;+\; \underbrace{[\,\Vinf(P)-\VG\,]}_{\text{method-driven gap}}.
\label{eq:principleD}
\end{equation}
The first term depends only on the data law: the minority-typical probability mass that
falls in majority-dominated territory. Under class overlap---a positive fraction $\eps$
of minority draws land where the likelihood ratio favours the majority---the floor is
strictly positive, $1-\Vinf\ge\eps/2>0$ (Theorem~1), and it is \emph{independent of sample
size}: a limit, hence bias, not variance. Validity is therefore not a certificate a method
carries between datasets; it is granted, or refused, by the data.

\paragraph{Why the floor is real and is the last layer standing.}
Theorem~1 leans on recent limit theory rather than fighting it. \citet{sakho2024rebalancing}
prove SMOTE's law converges to $f_1$ itself as the minority sample grows;
\citet{lyu2025bias} bound its finite-sample interpolation bias and show it \emph{decays}.
Stacked, the picture has three layers: a transient interpolation bias that vanishes (Lyu),
convergence to the true minority density (Sakho), and---our contribution---the floor
$1-\Vinf>0$ that the true minority density \emph{itself} carries under overlap. The first
two wash out; the third cannot, because it belongs to $P$. It is exactly what $\ERs$
estimates, and the two-Gaussian control (Fig.~\ref{fig:rho}) displays it in closed form:
the floor rises from 0.33 to 0.95 as imbalance grows from 1:1 to 100:1 while the classical
check reports a flat near-zero. The sharpest evidence is an identity control: verbatim real
minority, scored through the same protocol, is itself judged invalid at the floor rate
($\ERs$ 0.18--0.84 across the six datasets, equal within noise to the best generators) while
the classical check calls it perfectly valid ($\ERn=0.00$)---the invalidity belongs to the
data's overlap, not to fabrication (Table~\ref{tab:realmin}).

\paragraph{No escape through the gap.}
Can a method beat the floor---achieve $\VG>\Vinf(P)$? Yes, trivially: emit points only
from the minority's safest core, far from the boundary. But that surplus is bought by
\emph{unfaithfulness} ($g\neq f_1$), and it is the wrong purchase: the abandoned overlap
region is where a classifier needed help, so information gain goes to zero even as measured
validity improves; and by Lemma~C the distortion $g\neq f_1$ is the unrecoverable part of
the model-level damage. A generator escapes the floor only by failing the other two
criteria. Validity is necessary, not sufficient---our deep-generator track makes the
dissociation concrete: variational autoencoders achieve \emph{better} validity than SMOTE
on near-separable data ($\ERs$ 0.10--0.13 versus 0.22 on sylva) and still add nothing over
a class weight.

\paragraph{Two horns, one continuum.}
Fixing imbalance at 20:1 and sweeping class separation ties the theory in one picture
(Fig.~\ref{fig:horns}). As separation grows, $\ERs$ falls monotonically from 0.79 to
0.02---the synthetic points become genuinely valid---while the baseline classifier's F1
rises to 1.0 and information gain stays $\le 0$ at \emph{every} point, reaching exactly
zero only at the valid end. In the (validity, information-gain) plane, the trajectory hugs
the axes; the valid-\emph{and}-informative corner is empty, and random oversampling traces
the same path as SMOTE---the shape belongs to the problem geometry, not to any method. Real
data supply the anchors: on \texttt{shuttle\_c0\_vs\_c4} (imbalance 14:1, separable),
SMOTE's points are \emph{perfectly} valid ($\ERs=0.000$) and perfectly useless
($\Delta$F1$=0.000$, $\Delta$ECE$=0.000$, baseline F1 0.984); on \texttt{page\_blocks0} at
comparable imbalance the same method is invalid (0.223) and calibration-harmful
($+0.11$ ECE). Same instrument, same method, opposite verdicts---chosen by the data. This
forecloses the natural rebuttal \emph{``use oversampling where its points are valid''}:
there, they are valid and buy nothing.

\paragraph{The contamination identity---the model-level ledger.}
Adding mass $w$ of density $g$ to the minority intensity shifts a calibrated learner's
target posterior by a logit gap
\begin{equation}
\delta(x)=\log\!\Big(1+\tfrac{w}{\pi_1}\tfrac{g(x)}{f_1(x)}\Big)
=\log\tfrac{\pi_1+w}{\pi_1}+r(x),
\qquad r\equiv 0 \iff g=f_1
\label{eq:lemmaC}
\end{equation}
(Lemma~C). The first term is a constant prior-shift---exactly class-weighting, removable by
a scalar offset or threshold move, ranking-invariant. The remainder $r(x)$ is geometric
distortion that no scalar correction (offset or temperature), and under mild conditions no
univariate recalibration (Platt, temperature, isotonic), can remove. The dilemma at the model level: \textbf{oversampling
$\equiv$ class-weighting $+$ an uncontrolled, unrecoverable error term}, and the term
vanishes precisely where oversampling was redundant anyway. This turns the standard
defence---``oversampling is just implicit cost-sensitive learning''---into our theorem, and
sharpens it: cost-sensitive learning is a family (loss reweighting at training time;
cost-optimal thresholds at decision time \citep{elkan2001foundations}; example-dependent
costs), and the recoverable part of oversampling equals its \emph{simplest} member---class
weighting with $a=(\pi_1{+}w)/\pi_1$---not some richer cost structure. The defence is
exactly true iff $g=f_1$, i.e.\ iff the method does nothing a class weight doesn't.

We test the identity. Fit models on oversampled data, then grant each its \emph{best
possible} scalar logit correction (an oracle upper bound on any honest recalibration).
Class-weighted models return to baseline calibration under the \emph{analytic} offset
alone, as the $g=f_1$ case predicts. Oversampled models do not: on every overlapping
dataset, 86\% remain above baseline calibration error even under the oracle, with median
residuals four to five times class-weighting's; on near-separable data the residual
collapses. The unremovable part of the harm---$r(x)$---is measurable, and it lives exactly
where the validity floor does.

\section{No information a threshold could not add}
\label{sec:infogain}

Validity asks whether the synthetic points are real minority; information gain asks whether
they help. We measure the second under an honest protocol---split first, resample the
\emph{training half only}, evaluate on untouched test data---against the three trivial
baselines any practitioner has for free: no resampling, class-weighting, and a
cross-validated decision threshold. Every comparison is per-dataset against the \emph{best}
of the three. The result is not that no method \emph{ever} edges a baseline---a handful do,
by hundredths of an F1 point---but that no gain exceeds what a decision threshold already
supplies: the margins are noise-thin (median beat below 0.01 F1), they vanish or reverse
under boosting, and they come with worse calibration. Paired with the validity floor, no
method clears both bars at once.

\paragraph{The margins are noise-thin, and the direction is consistent.}
Across six naturally-imbalanced datasets and 91 methods under logistic regression---all 91
on every set except fraud, where 76 run and 15 are omitted for runtime (SI)---the
best oversampler beats the best trivial baseline by between $+0.004$ and $+0.107$ F1, with
median beat below 0.01. A test-set bootstrap (2{,}000 resamples) settles significance: across
all six datasets and three classifiers (17 cells, Table~\ref{tab:ci}), \textbf{not one} of the
best methods has a $\Delta$F1 95\% CI excluding zero---the largest margin anywhere, Pima's
$+0.035$, carries CI $[-0.013,+0.079]$ ($p=0.08$)---and on two boosted \texttt{accidents} cells
the best margin is significantly \emph{negative}. The largest apparent ``gains'' occur only
where the no-resampling baseline degenerates outright (thoracic surgery, where unweighted
training collapses to the majority class). Under XGBoost, pooled across the five real datasets: median $\Delta$F1 $+0.0007$
versus no-resampling, median $\Delta$AUPRC $-0.0004$, median $\Delta$AUROC $0.0000$, and
65\% of pairs \emph{worsen} calibration; 21\% beat their best trivial baseline at all. Under
LightGBM: 14\%. One cell runs with the trend rather than against it---sylva under XGBoost,
where 54 of 87 methods edge no-resampling---but by at most $+0.014$ F1, on a near-separable
set whose plain model already scores F1~$0.95$, with 71 of 87 \emph{worsening} calibration and
none clearing validity: a majority of tiny, calibration-costly wins on an easy set, not new
signal. On the largest real-world set (traffic-accident severity,
$n\approx 12{,}700$; \citep{al2025interpretable}), \textbf{zero of 87 methods} beat
threshold-moving under either boosted
model. The thesis is classifier-independent.

\paragraph{The illusion, named.}
How has two decades of literature reported the opposite? Two mechanisms, both measurable.
The first is \emph{evaluation leakage}: oversampling before splitting, so near-copies of
test minority appear in training---widespread in the applied literature and repaired by a
one-line reordering. The second survives honest evaluation: F1 at the default threshold
rises while nothing about the model's \emph{ranking} improves. Across 614 method-dataset
points (Fig.~\ref{fig:illusion}; this pooled count spans every benchmarked dataset and
classifier, including the two Parkinson sets, whereas the per-method ranking tables of the
appendix report the 515 cells of the six logistic-regression datasets), median $\Delta$F1 is
$+0.06$ while median $\Delta$AUROC is
$-0.0002$ and $\Delta$AUPRC $-0.0026$; on the five real datasets the split is starker---median
$\Delta$F1 $+0.139$ against $\Delta$AUPRC $-0.005$, with \textbf{62\% of points raising F1
while PR-AUC does not improve}, and Brier worsening in 90\%. A gain in F1 with flat AUCs is
the signature of an \emph{operating-point shift}: the model did not learn more, its threshold
moved---reproducible for free, without fabricated data and without the calibration cost, by
moving the threshold. The calibration cost grows with the apparent F1 gain, exactly as
$r(x)$ predicts; in a deployed risk model a systematically inflated minority probability
\emph{is} the harm.

\paragraph{Deep generators do not change the verdict.}
Conditional VAEs, GANs, CTGAN, TVAE and a tabular diffusion model, through the identical
three-tier audit: on high-overlap data their best $\ERs$ hits the same floor as SMOTE's
($\approx 0.77$--$0.80$); on near-separable data they achieve \emph{better} validity than
any interpolator---and still no information gain (sylva: best deep F1 0.948 versus
class-weight 0.950). A better generative model solves the wrong problem: the floor and the
redundancy belong to the data.

\section{Two capstone validations}
\label{sec:capstones}

\paragraph{Data-side: credit-card fraud.}
The ULB fraud dataset \citep{dalpozzolo2015calibrating} is everything our development sets
are not: financial rather than clinical, PCA-transformed signed features, extreme natural
imbalance (578:1; capped at 102:1 for tractability---conservative, since the floor grows
with imbalance---for a working set of $n\approx 50{,}000$ rows). The findings transfer intact. The classical check certifies 56 of 69
methods; the de-biased check certifies 15, and methods scored \emph{perfect} by the
classical protocol carry $\ERs$ up to 0.58. Because fraud is largely separable here, the
plain classifier is already strong, and the redundancy horn shows in its purest form: under
both logistic regression and XGBoost the best baseline is \emph{no resampling at all}, only
3 of 75 methods exceed it, by at most $+0.005$ F1, and $\sim$95\% worsen calibration. The
ground-truth design pushes to imbalance 150--450---far beyond any benchmark in the
battery---and the de-biased estimator stays within 0.037 of held-out truth (correlation
0.90) while the classical test underestimates in 96\% of cells by a growing margin
(Fig.~\ref{fig:fraud}). A new domain, a new geometry, the same three results.

\paragraph{Method-side: a generator engineered to pass the old test.}
SMOTEFUNA \citep{tarawneh2020smotefuna}---from our own group---draws candidates in the
hyper-cuboid between a minority seed and its \emph{furthest} minority neighbour, then keeps
a candidate only if its nearest real neighbour is minority. That rule \textbf{is} the
classical validity check, moved inside the generator. It works exactly as designed: under
its own metric, $\ERn=0.000$ on every dataset tested---not approximately, but by
construction (Fig.~\ref{fig:smotefuna}). The de-biased test is unmoved: 31\% of its
certified minority is majority in truth on Pima, 49\% on Framingham, \textbf{88\% on
thoracic surgery}, and its information gain is nil ($\le +0.01$). A criterion a generator
can saturate while producing majority points is not measuring validity; it is being
gamed---Goodhart's law \citep{strathern1997improving} with an existence proof, and a
construct-validity failure the withheld-data estimator repairs. That the counterexample is
our own prior method is the point: the instrument convicts its makers' work.

Two sub-findings sharpen it. The built-in guarantee is \emph{metric-fragile}---re-scored
under a different distance, the same points show $\ERn$ of 0.23 before any
de-biasing---so ``valid'' was never a property of the synthetic data, only of a particular
test against a particular sample. And SMOTEFUNA \emph{wins} the classical comparison against
SMOTE (lower $\ERn$) while \emph{losing} the real one (higher $\ERs$ on the hardest set):
optimizing the biased criterion actively selects for hidden invalidity.

\section{A deployable standard}
\label{sec:tool}

Because validity is data-dependent (Principle~D), it cannot be certified on benchmarks and
assumed in deployment; it must be measured where the model will live. We release the
instrument as \texttt{resample-audit} (PyPI, $\ge 0.1.1$; concept DOI
10.5281/zenodo.21444930; MIT) \citep{hassanat2026resampleaudit}, a dependency-light library
whose single call \texttt{audit(X, y, resampler)} accepts anything exposing the standard
resampling interface---the SMOTE family, deep tabular generators, or a user's own
method---and returns $\ERs$ with a Wilson interval, the classical $\ERn$ for contrast, the
information gain against the three trivial baselines, ranking and calibration deltas, and a
pass/fail verdict against the conjunctive standard. The audit that fills this paper's tables
is the shipped code path, not a private pipeline. For authors of new oversamplers the
standard is constructive, not hostile: report $\VG$ and $\IG$ on your evaluation data, and
the first synthetic-minority method to clear both bars will have shown something no member
of the current family has.

\paragraph{Is the standard passable?} It is not vacuous. A method clears it by emitting
synthetic minority that is both faithful against withheld data ($\ERs\to0$) \emph{and}
informative beyond a threshold ($\IG>0$). The two bars pull in opposite directions for any
method whose only input is the observed sample: on separable data faithfulness is easy but
information is impossible (the plain model already suffices), while under overlap information
lives in the boundary region, where faithfulness fails unless the generator supplies genuinely
new knowledge of $f_1$ there---an informative prior, auxiliary data, a mechanistic model, or a
measurement the training sample lacks. That is precisely the escape our results leave open:
the standard is passable by a generator that \emph{injects real information} about the
minority law, and unpassable by interpolation or density-matching alone, because those only
re-express the sample they were given. The bar is set where a genuine advance would clear it.

\section{Scope and limitations}
\label{sec:scope}

\paragraph{Where the estimator is trustworthy---and where estimation is not.}
The de-biased estimator's ground-truth tracking (corr 0.79--0.93 at the dataset level;
0.47--0.98 per imbalance-ratio cell, weakest near balance) spans datasets from a few
hundred to fifty thousand rows. Its honest failure regime is $d\gg n$, where SMOTE is known to
degrade \citep{blaguslusa2013}: on a Spanish-speaker Parkinson's speech dataset with 100
samples and 589 features \citep{orozco2014new,hassanat2026machine},
it still recovers the \emph{level} of
invalidity five-fold better than the classical test (MAE 0.13 versus 0.63), but its
cross-method \emph{ranking} correlation collapses to 0.22. A $k$-nearest-neighbour vote
targeting the hard validity $V_{1/2}$ removes the estimator-variance part---in a matched
controlled model it restores ranking from 0.73 to 0.96, and on the real data it sharpens the
level to MAE 0.029---whereupon it exposes the deeper fact: the true hard validity is
\emph{saturated} ($0.98\pm0.04$, 93\% of methods above 0.9). In that regime there is no
ranking to recover because the entire family fails together. This is an estimation limit,
not a counterexample: at $d/n\approx6$ no distance-based quantity---including the ground
truth itself, from $\sim$12 held-out points---is reliably measurable, and the regime's
lesson (a vast, under-sampled space in which fabricated points find no real support) is the
thesis's mechanism, not its refutation.

\paragraph{Finite-reference bias on the smallest set.} $\ERs$ is asymptotic in the withheld
reference (Proposition~1); with few withheld minority points the 1-NN vote is minority-sparse
and the estimate is biased \emph{upward}---the more so the lower the true floor. We quantify
this with the identity generator on two ground-truth sets (Table~\ref{tab:finref}): at a
35-point minority reference the inflation is $+0.28$ on Pima (floor $0.49$) and $+0.53$ on
spambase (floor $0.12$), decaying to $\le0.02$ once the reference reaches a few hundred points.
Exactly one of our datasets sits in this regime---thoracic surgery, whose 70 minority leave a
35-point reference---so its absolute $\ERs$ ($0.83$) is inflated and its true floor is
uncertain (roughly $0.3$--$0.55$ after correction). Every other real and ground-truth set
withholds $\ge134$ minority (Pima 134, fraud 246, Framingham 278, sylva 402, accidents 928;
spambase and MAGIC in the thousands), where the bias is $\le0.02$, so the ground-truth tracking
(corr $0.79$--$0.93$) is measured entirely in the unbiased regime. The \emph{verdict} is
robust---even at the low end of the correction thoracic sits far above the $0.10$ certification
bar and certifies none, and the identity control inflates in lockstep, preserving the
real-versus-generator comparison---but thoracic's absolute magnitude is only an upper bound.

\paragraph{Other boundaries.}
Two methods (E\_SMOTE, ISOMAP-Hybrid) emit synthetic points in a \emph{transformed} feature
space; an original-space validity vote is undefined for them by construction, and they are
excluded by design. A small set of method-dataset cells are degenerate no-ops---findings,
not failures. One benchmark cell (ROSE $\times$ sylva $\times$ LightGBM) crashes inside
LightGBM natively on both Windows and Linux and is footnoted. Our claims concern binary
classification with tabular features; extending the validity functional to imbalanced
regression requires redefining ``invalid synthetic point'' for continuous targets---the
framework transfers, the instrument is future work.

\section{Discussion}
\label{sec:discussion}

Twenty years of synthetic oversampling rested on a check that could not fail, and the result
was a literature optimizing an unfalsifiable criterion. The repair is not a better generator;
it is a better measurement. Once validity is defined on the population and estimated against
withheld reality, the phenomenon reorganizes: overlap sets a floor no faithful generator
escapes; separability makes generation redundant; the model-level effect decomposes into free
class-weighting plus unrecoverable distortion; and the reported gains of the field resolve
into threshold shifts measured by a leaky protocol. Findings that circled this object from
different directions---the SMOTE limit law \citep{sakho2024rebalancing}, its density
\citep{elreedy2024smote}, its vanishing finite-sample bias \citep{lyu2025bias},
threshold-dominance \citep{ahmad2025concentration}, calibration harm in clinical models
\citep{goorbergh2022harm}---become facets of one account: we name the object, measure it,
bound it, and ship the measurement.

The practical prescription is estimation--decision separation, which costs nothing: estimate
honest probabilities on the data as they are; encode asymmetric costs where they belong, in
the decision rule. Scarcity is signal---the class prior is part of the data-generating
process, and a model forced to unlearn it repays the favour with inflated risk estimates
precisely for the cases that matter. Where deployment priors genuinely differ, explicit prior
correction does what fabricated data does opaquely, without the geometric residue.

We do not claim synthetic minority data can never help; we claim helping now has a test. The
standard is conjunctive, per-dataset, and public: demonstrate validity against withheld
reality, and information beyond a threshold. Nothing in 91 classical methods, five deep
generators, or a method built to pass the old check clears it. The claim is falsifiable, and
we make the target explicit: a counterexample is a single method that, on a given dataset
under a pre-registered protocol, clears \emph{both} bars at once---low $\ERs$ against withheld
reality \emph{and} a ranking gain over the best trivial baseline (positive $\Delta$PR-AUC or
$\Delta$ROC-AUC, above noise and reproducible across seeds and classifiers), not merely a
higher F1 at the default threshold. We tabulate every method's per-cell ranking deltas in the
Supplementary Information so that this search is fully auditable: across all method--dataset
cells the median ranking gain is negative, and the isolated positives are neither valid nor
reproducible. An $F1$ gain alone does not qualify, precisely because it is the operating-point
shift a threshold reproduces for free. The burden of proof has changed hands.

\section*{Methods}

\emph{Condensed; formal statements and proofs---Prop.~0/0$'$, Lemma~0L, Prop.~1, Thm.~1,
Principle~D, Lemma~C---plus full protocol, dataset provenance, environments and the
excluded-cell inventory are in the Supplementary Information.}

\paragraph{Validity protocol.} $\ERn$: 1-NN vote against the generator's own training sample,
all parents retained. $\ERs$: five random half-splits per class; generate on half~A---so the
generator is trained on $n_1/2$ minority points; because the floor is sample-size-independent
(Theorem~1) this does not bias $\ERs$ as an estimate of $1-\VG$, though it modestly raises
method-specific variance, which averaging over the five splits controls---then 1-NN
vote against half~B (reference capped at 20{,}000, all minority kept); Wilson intervals;
Hassanat scale-free distance by default \citep{hassanat2014dimensionality,hassanat2026optimality};
a $k$-NN vote (with $k$ cross-validated; the classical $\sqrt{n}$ rule is dimension-suboptimal,
\citep{hassanat2026optimality}) targets the hard validity $V_{1/2}$ for $d\gg n$.
Deterministic per-method seeding.

\paragraph{Ground truth.} Remove real minority to a target ratio; the removed points form the
$\ERt$ reference; per (method, ratio) cell report naive/split/truth.

\paragraph{Benchmark.} Stratified 70/30 split; standardize on train; resample train only;
LogReg / XGBoost / LightGBM; baselines no-resample, class-weight (per-classifier mechanism),
threshold-move (3-fold OOF F1-optimal); F1, ROC-AUC, PR-AUC, Brier, ECE. Per-cell 95\%
confidence intervals and one-sided significance for the deltas are obtained by bootstrapping
the test rows (2{,}000 resamples; \texttt{benchmark\_ci.py}).

\paragraph{Data availability.} All datasets used are publicly available: the KEEL imbalanced
benchmarks \citep{alcala2011keel}, the oversampling methods and interfaces of the
smote-variants library \citep{kovacs2019smotevariants}, the ULB credit-card fraud set
\citep{dalpozzolo2015calibrating}, and the clinical, accident-severity and Parkinson's
speech-feature datasets described in the appendix, whose original sources are cited therein.
The processed result CSVs underlying every table and figure are deposited in the paper
repository.

\paragraph{Code availability.} The audit instrument is released as \texttt{resample-audit}
(PyPI; Zenodo concept DOI 10.5281/zenodo.21444930; MIT licence). The full experimental
harness, figure scripts and reproduction protocol are in the paper repository; every table
regenerates via \texttt{repro\_tables.py} and every figure via \texttt{make\_all\_figures.py}.

\paragraph{Use of large language models.} During the preparation of this manuscript the authors
used a large language model-based assistant to support drafting and copy-editing of text,
organisation of references, scaffolding and debugging of analysis and figure code, and
formatting of tables. It was not used to generate data, experimental results, or scientific
claims; all experiments, theoretical results, analyses, and conclusions were conceived,
executed, and verified by the authors, who reviewed and edited every output and take full
responsibility for the content. The large language model does not meet authorship criteria and
is not credited as an author.

% ---- figure floats for sections 3-6 ----
\begin{figure}[t]\centering
\includegraphics[width=\textwidth]{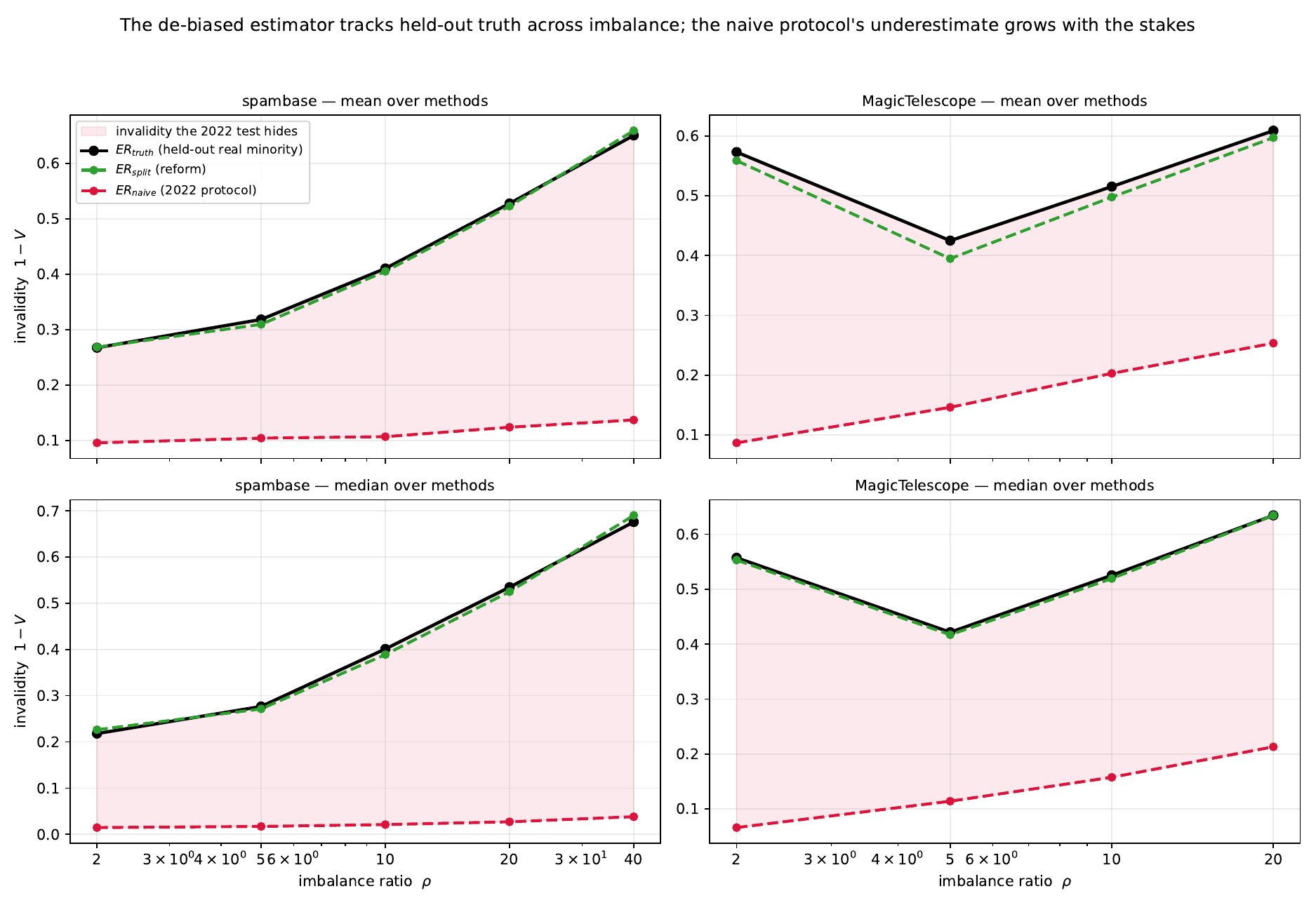}
\caption{\textbf{The de-biased estimator tracks held-out truth across imbalance; the naive
protocol's underestimate grows with the stakes.} Mean and median over methods on two
ground-truth datasets; shaded band is the invalidity the classical test hides.}
\label{fig:invalidity}\end{figure}

\begin{figure}[t]\centering
\includegraphics[width=\textwidth]{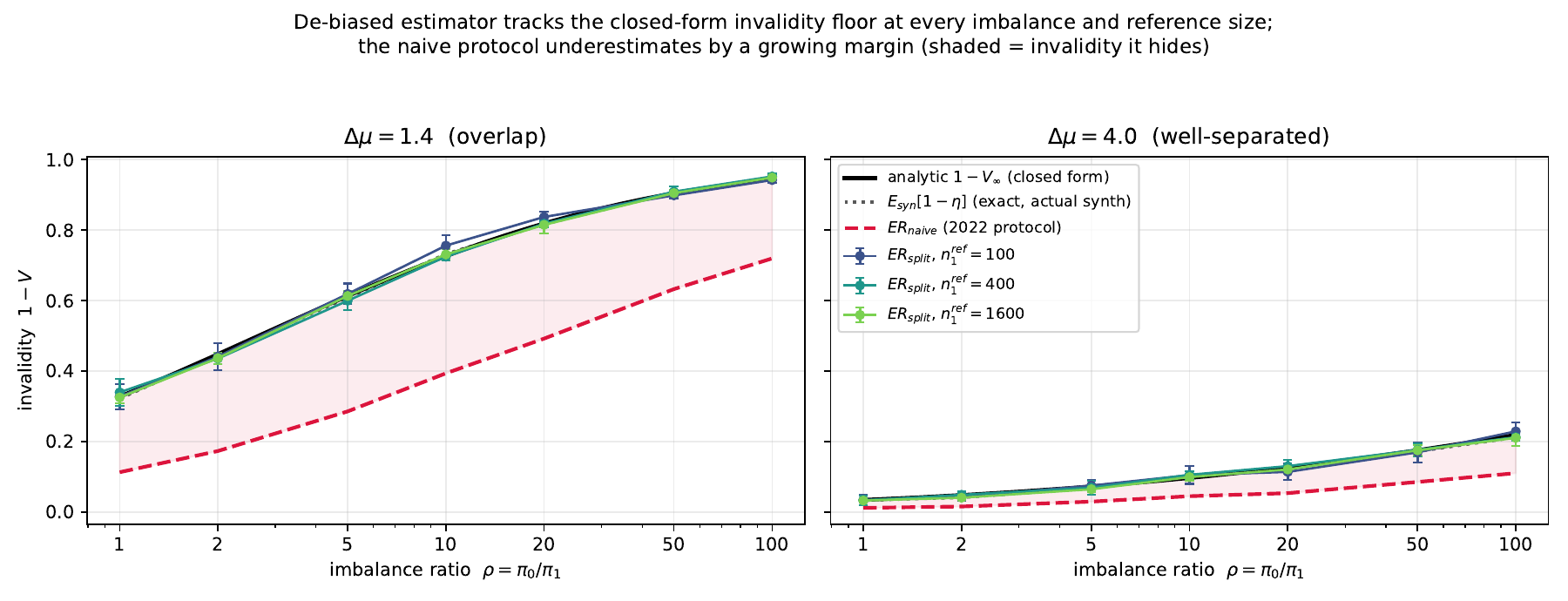}
\caption{\textbf{Closed-form control.} In a two-Gaussian model the de-biased estimator tracks
the analytic $1-\Vinf$ at every imbalance and reference size; $\ERn$ undershoots by a growing
margin.}
\label{fig:rho}\end{figure}

\begin{figure}[t]\centering
\includegraphics[width=\textwidth]{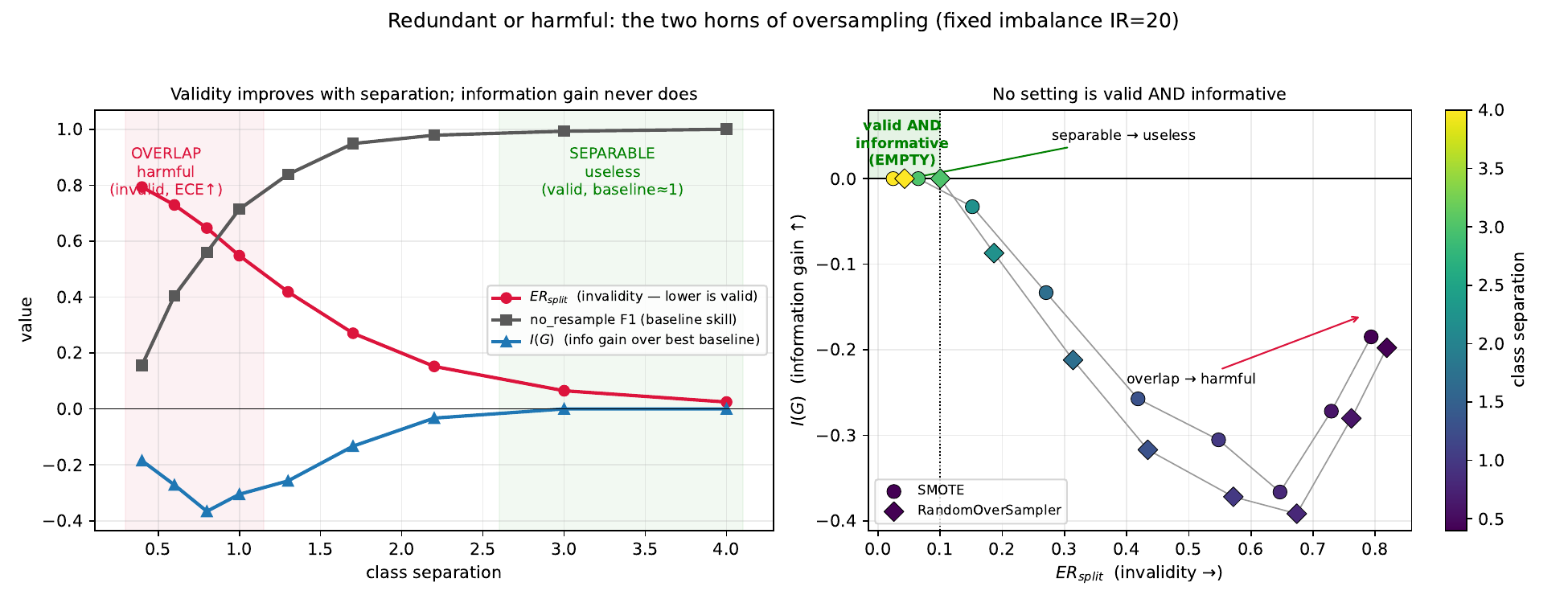}
\caption{\textbf{Redundant or harmful.} Sweeping class separation at fixed imbalance: validity
improves ($\ERs\downarrow$) while information gain never does; the (validity, information-gain)
plane has an empty valid-and-informative corner.}
\label{fig:horns}\end{figure}

\begin{figure}[t]\centering
\includegraphics[width=\textwidth]{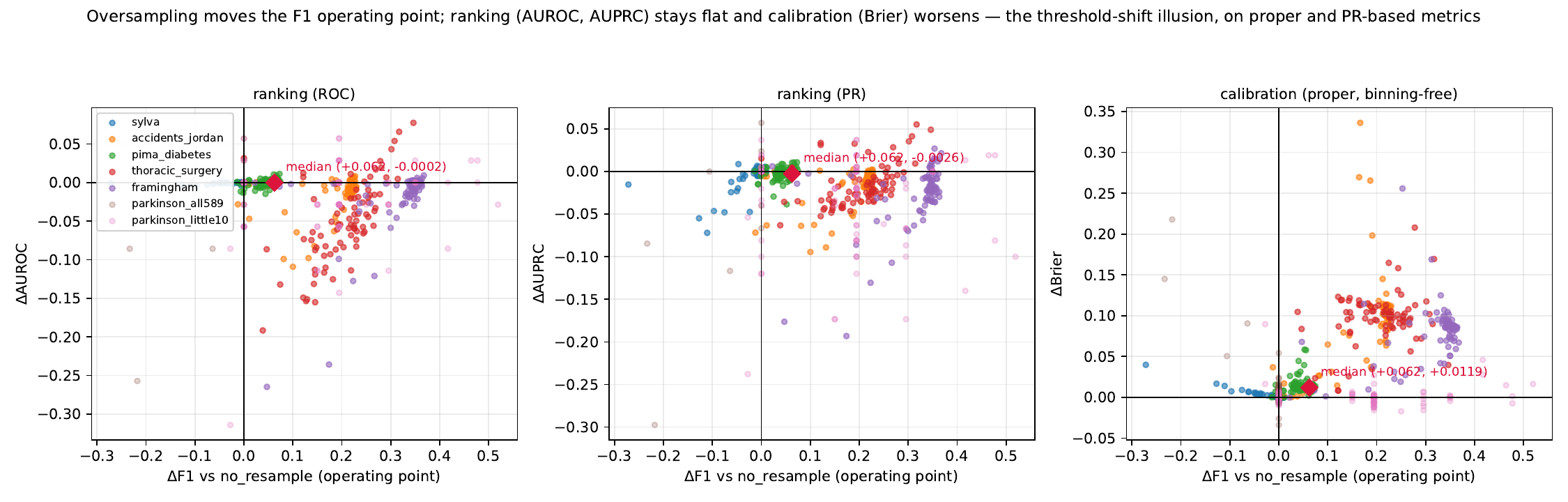}
\caption{\textbf{The threshold-shift illusion.} Oversampling moves the F1 operating point while
ranking (AUROC, AUPRC) stays flat and calibration (Brier) worsens.}
\label{fig:illusion}\end{figure}

\begin{figure}[t]\centering
\includegraphics[width=0.85\textwidth]{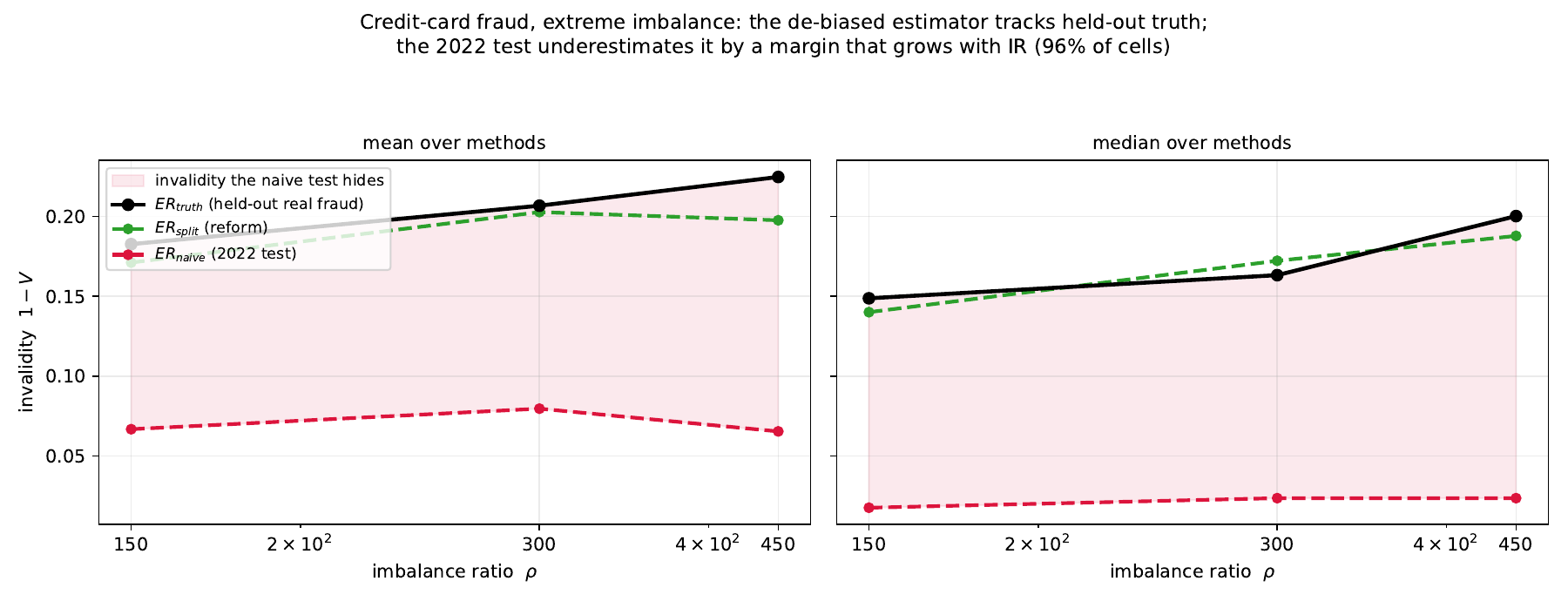}
\caption{\textbf{Data-side capstone (credit-card fraud).} At imbalance 150--450 the de-biased
estimator tracks held-out real fraud; the classical test underestimates in 96\% of cells.}
\label{fig:fraud}\end{figure}

\begin{figure}[t]\centering
\includegraphics[width=0.8\textwidth]{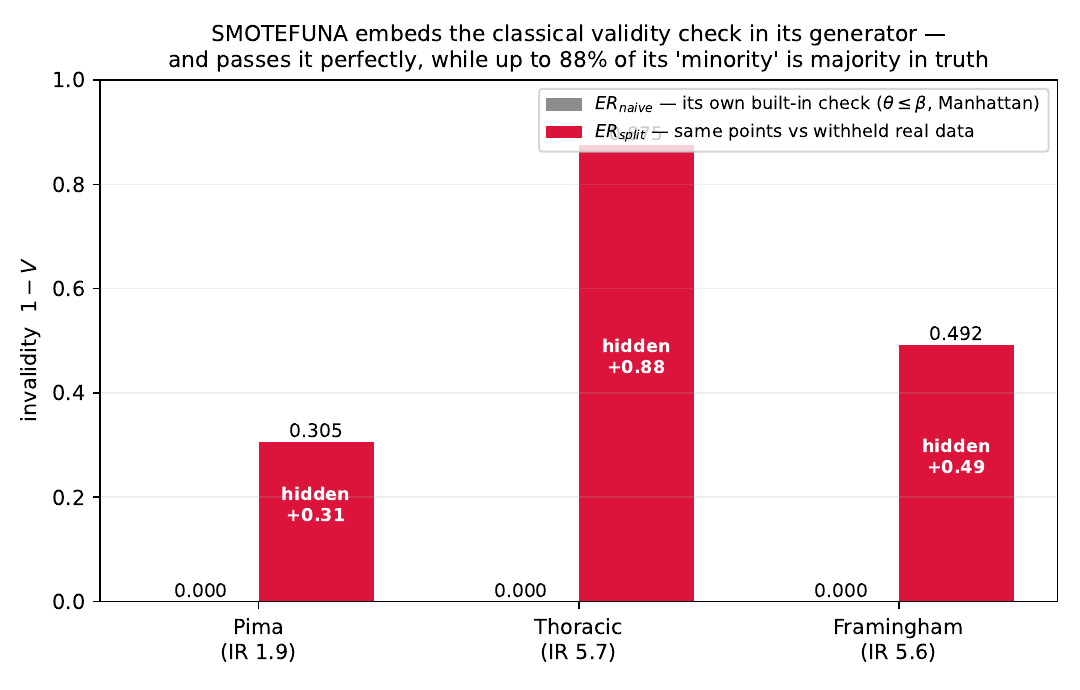}
\caption{\textbf{Method-side capstone (SMOTEFUNA).} Its acceptance rule is the classical check,
so $\ERn=0$ by construction; the de-biased test finds up to 88\% of its certified ``minority''
to be majority in truth.}
\label{fig:smotefuna}\end{figure}

\section*{Author contributions}
A.B.H. conceived the study, developed the data-dependent validity theory and the de-biased
estimator, solely authored the \texttt{resample-audit} software, and led the writing. A.B.H.
and A.S.T. designed the audit methodology and experiments, and A.S.T. implemented and executed
the benchmark, ground-truth and deep-generator experiments. G.A.A. contributed to dataset
curation and to the analysis and interpretation of the clinical and applied results. All
authors discussed the findings and reviewed, edited and approved the manuscript.

\section*{Competing interests}
The authors declare no competing interests.

\clearpage
\appendix
\setcounter{table}{0}\renewcommand{\thetable}{S\arabic{table}}
\setcounter{figure}{0}\renewcommand{\thefigure}{S\arabic{figure}}
\setcounter{equation}{0}\renewcommand{\theequation}{S\arabic{equation}}
\begin{center}\Large\bfseries Supplementary Information\end{center}
\medskip
\noindent\textit{Formal statements and proofs supporting the main text. Notation follows
\S2 of the main text. Results are labelled to
match the main text rather than by running number: Propositions~0 and~0$'$ (impossibility of
self-certification), Proposition~1 (consistency of the split estimator), Theorem~1 (the
invalidity floor), Lemma~0L (parent leakage in the classical check), Lemma~C (the
contamination identity), and Principle~D (the data-driven decomposition).}

\section{Formal framework}

\paragraph{Setup.} A population $P$ on $\R^d\times\{0,1\}$ has class priors
$\pi_1=\P(Y{=}1)$, $\pi_0=1-\pi_1$ with $\pi_1\le\pi_0$, class-conditional densities
$f_1,f_0$ w.r.t.\ a common dominating measure, and posterior
$\eta(x)=\pi_1 f_1(x)/(\pi_1 f_1(x)+\pi_0 f_0(x))$. Write $\rho=\pi_0/\pi_1\ge1$. A
generator $G$ trained on $D=\{(X_i,Y_i)\}_{i=1}^n$ produces synthetic minority points
$\tilde X_1,\dots,\tilde X_m\sim g$ (the density $g$ may depend on $D$). All
``minority/majority'' labels refer to the Bayes rule $\mathbf{1}[\eta\ge\tfrac12]$, i.e.\
to the population, never to training labels.

\begin{definition}[synthetic validity]
$\VG=\E_{\tilde X\sim g}[\eta(\tilde X)]\in[0,1]$; the invalidity is
$1-\VG=\E_{\tilde X\sim g}[1-\eta(\tilde X)]$. The hard invalidity is
$1-V_{1/2}(G)=\E_{\tilde X}[\mathbf{1}[\eta(\tilde X)<\tfrac12]]$.
\end{definition}

\begin{definition}[information gain]
For a learner $\ell$ and metric $M$, let $M^\star_{\mathrm{triv}}(D)$ be the best test-set
$M$ over $\{$no-resample, class-weight, threshold-move$\}$ fit on $D$, and $M(G,D)$ the
test-set $M$ of $\ell$ trained on $D$ augmented by $G$. Then $I(G)=M(G,D)-M^\star_{\mathrm{triv}}(D)$.
The \emph{standard} is the conjunction $\{\VG\to1\}\wedge\{I(G)>0\}$ on the dataset at hand.
\end{definition}

\section{Unfalsifiability and the repair by withholding}

\begin{propzero}[self-certification is impossible]
Fix any $n,m$ and any statistic $T=T(D,\tilde X_{1:m})$. For every $\varepsilon>0$ there
exist populations $P,P'$ and a data-independent generator $G$ such that $P,P'$ differ only
on a region $A$ with mass $\le\varepsilon$, carry opposite Bayes labels on $A$, yet
$\mathrm{TV}(\mathrm{Law}_P(T),\mathrm{Law}_{P'}(T))\le n\varepsilon$ \emph{free of $m$},
while $|V_P(G)-V_{P'}(G)|=1$.
\end{propzero}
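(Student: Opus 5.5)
We will prove Proposition~0 with a two-point (Le Cam) construction in which the synthetic points are forced to lie in a region the training sample almost never visits. The generator is fixed first and independent of the data, so the synthetic points carry no information about which population produced $D$.

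\emph{Construction.} Fix a base population $Q$ with priors $\pi_1,\pi_0$ and densities $f_1,f_0$, for instance two well-separated unit-mass bumps on $\R^d$. Choose a small set $A$, say a ball far from the supports of both bumps. Build $P$ by moving mass $\varepsilon$ of the joint law onto $A$ with label $1$ only, so that $\eta_P=1$ on $A$. Build $P'$ identically, except that the same mass $\varepsilon$ sits on $A$ with label $0$ only, so that $\eta_{P'}=0$ on $A$. The remaining mass is renormalised identically in both populations, by shrinking the complement by a factor $1-\varepsilon$. Then $P$ and $P'$ agree off $A\times\{0,1\}$, each gives mass $\varepsilon$ to $A$, and they carry opposite Bayes labels on $A$. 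Let $G$ be the data-independent generator with $g$ uniform on $A$. Then $V_P(G)=\E_g[\eta_P]=1$ and $V_{P'}(G)=0$, so $|V_P(G)-V_{P'}(G)|=1$.

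\emph{TV bound.} The pair $(D,\tilde X_{1:m})$ has law $P^{\otimes n}\otimes g^{\otimes m}$ under $P$, and $P'^{\otimes n}\otimes g^{\otimes m}$ under $P'$. The factor $g^{\otimes m}$ is identical in both, and TV is invariant under tensoring with a common independent factor. So
\[
\mathrm{TV}\le \mathrm{TV}(P^{\otimes n},P'^{\otimes n})\le n\,\mathrm{TV}(P,P').
\]
This bound is free of $m$. By the data-processing inequality, applying the measurable map $T$ cannot increase TV. It remains to bound $\mathrm{TV}(P,P')$. The two laws differ only on $A\times\{0,1\}$: $P$ places mass $\varepsilon$ on $A\times\{1\}$, while $P'$ places the same mass on $A\times\{0\}$. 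Hence $\mathrm{TV}(P,P')=\varepsilon$, and the total bound is $n\varepsilon$. A cleaner route is a coupling: draw the same sample under both laws and flip the labels of points landing in $A$. The two samples then disagree only if some point lands in $A$, which happens with probability at most $n\varepsilon$.

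\emph{Main obstacle.} The delicate point is reconciling the priors and densities with the framework's conventions. The label flip changes the class priors by $\pm\varepsilon$ and the class-conditional densities accordingly. We must check that $\pi_1\le\pi_0$ still holds, which is true for small $\varepsilon$ once $\pi_1$ is bounded away from $1/2$ in $Q$. We must also check that $\eta_P$ and $\eta_{P'}$ are well defined $g$-a.e. on $A$, which holds because $A$ has positive class-conditional mass under exactly one class in each population. If one insists that $P$ and $P'$ share the same marginal on $X$, the joint construction above already has that property, so no modification is needed. Finally, "$\varepsilon$ mass" should be read as mass under the joint law, and the coupling bound requires nothing beyond it.
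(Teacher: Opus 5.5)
Your proposal is correct and follows essentially the same route as the paper. The paper uses the same two-point construction: a shared $X$-marginal, $\eta\equiv1$ versus $\eta\equiv0$ on a region $A$ of mass $\le\varepsilon$, a data-independent $\mathrm{Unif}(A)$ generator, and the coupling that flips labels on $A$ to get $\mathrm{TV}\le n\varepsilon$ free of $m$, with the prior constraint $\pi_1\le\pi_0$ likewise handled by taking the mass on $A$ small. Your tensorization-plus-data-processing route is an equivalent repackaging of that coupling bound.
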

\begin{proof}
Two-point (Le Cam) construction \citep{tsybakov2009nonparametric}. Fix a marginal law $\mu$
for $X$ and a Borel region $A$ with $\mu(A)\le\varepsilon$. Define $P$ and $P'$ to share the
marginal $\mu$ and the posterior $\eta$ off $A$, and to differ only on $A$: put
$\eta\equiv1$ on $A$ under $P$ (Bayes-minority) and $\eta\equiv0$ on $A$ under $P'$
(Bayes-majority); for $\varepsilon$ small the added minority mass keeps $\pi_1\le\pi_0$ in
both populations. Take the generator $G$ to be data-independent with law
$\mathrm{Unif}(A)$, identical under $P$ and $P'$. Then $V_P(G)=\E_{\tilde X}[\eta_P(\tilde
X)]=1$ and $V_{P'}(G)=\E_{\tilde X}[\eta_{P'}(\tilde X)]=0$, so $|V_P(G)-V_{P'}(G)|=1$.

Couple the two experiments on a common probability space: draw
$X_1,\dots,X_n\stackrel{\text{iid}}{\sim}\mu$ (shared) and
$\tilde X_{1:m}\sim\mathrm{Unif}(A)$ (shared, drawn independently of the labels). Off $A$
assign each $Y_i$ by the common conditional; on $A$ set $Y_i=1$ under $P$ and $Y_i=0$ under
$P'$. On the event $E=\{X_i\notin A\ \text{for all }i\}$ the pair $(D,\tilde X_{1:m})$---and
hence any statistic $T(D,\tilde X_{1:m})$ of it---is identical under $P$ and $P'$. The
coupling inequality gives
\[
\mathrm{TV}\big(\mathrm{Law}_P(T),\mathrm{Law}_{P'}(T)\big)\ \le\ \P(E^{c})\ \le\
\sum_{i=1}^{n}\P(X_i\in A)\ =\ n\,\mu(A)\ \le\ n\varepsilon,
\]
free of $m$, since the synthetic points are coupled identically and their law does not
depend on $P$ versus $P'$. Thus no statistic of the generator together with its own training
data can separate two populations of maximally different validity; the discriminating signal
is confined to the $O(n\varepsilon)$ probability that a training point lands in $A$.
\end{proof}

\begin{propzeroprime}[withholding restores identification]
Fix $(P,G)$ under the regularity of Proposition~1. Let $R$ be a reference
sample of size $r$ drawn from $P$ and independent of the data used to fit $G$. Then
$\ERs\xrightarrow{\ \P\ }1-\VG$ as $r\to\infty$, and in the construction above the reference
distinguishes $P$ from $P'$ once $r\gtrsim1/\varepsilon$.
\end{propzeroprime}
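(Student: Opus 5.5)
The proof has two parts: a consistency claim, which we get from Proposition~1, and the separation claim for the construction of Proposition~0, which we get by an explicit counting argument.

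\emph{Consistency.} Proposition~1 is stated later, but here we take its regularity conditions as given. Its proof should be cited rather than redone. The only property it needs is that the reference $R$ is independent of $\tilde X_{1:m}$ and of the data used to fit $G$.

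Conditional on the fitted $G$, the synthetic points are i.i.d.\ from $g$, and $R$ is an i.i.d.\ $P$-sample of size $r$. For a fixed query point $x$, the nearest neighbour of $x$ in $R$ converges to $x$ almost surely as $r\to\infty$. By the Cover--Hart argument, if $\eta$ is continuous at $\mu$-almost every $x$, the neighbour's label is asymptotically Bernoulli$(\eta(x))$, so
\[
\P(\text{vote}=0\mid\tilde X=x)\to 1-\eta(x).
\]
This needs the support of $g$ to lie inside the support of $\mu$, an assumption included in the regularity of Proposition~1. Dominated convergence then gives $\E[\ERs]\to\E_g[1-\eta(\tilde X)]=1-\VG$.

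The variance of the average over $m$ synthetic points is $O(1/m)$ plus a covariance term. The covariance comes from points sharing a neighbour in $R$, and it vanishes because the fraction of $\tilde X$-pairs sharing a nearest neighbour goes to zero. Chebyshev's inequality then gives convergence in probability.

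\emph{Identification in the two-point construction.} Take $P,P'$ and $G=\mathrm{Unif}(A)$ as in the proof of Proposition~0. Let $N=\#\{j:R_j\in A\}\sim\mathrm{Bin}(r,\mu(A))$. Then
\[
\P(N=0)=(1-\mu(A))^r\le e^{-r\mu(A)},
\]
which is small once $r\gtrsim 1/\varepsilon$, choosing $\mu(A)$ of order $\varepsilon$ (say $\mu(A)=\varepsilon$).

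On the event $N\ge1$, the reference points falling in $A$ carry label $1$ under $P$ and label $0$ under $P'$. This alone already separates the two populations: the test ``do reference points in $A$ have label~1?'' errs with probability at most $e^{-r\varepsilon}$.

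To connect the separation to $\ERs$ itself, choose $A$ with a positive-measure interior, for instance a small ball. For each $\tilde X\in A$ we then have the following:
\begin{itemize}
\item once $r\varepsilon$ is large, the nearest reference neighbour of $\tilde X$ lies in $A$ with probability tending to one;
\item hence $\ERs\to0$ under $P$ and $\ERs\to1$ under $P'$, in agreement with $V_P(G)=1$ and $V_{P'}(G)=0$.
\end{itemize}

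\emph{Main obstacle.} The delicate step is the nearest-neighbour argument near the boundary of $A$, where a synthetic point close to $\partial A$ may have its nearest neighbour outside $A$. I would handle this by shrinking to $A_\delta=\{x\in A:\mathrm{dist}(x,A^c)>\delta\}$. The fraction of $g$-mass in $A\setminus A_\delta$ goes to $0$ as $\delta\to0$. Meanwhile, the probability that a ball of radius $\delta$ around a point of $A_\delta$ contains no reference point is at most $(1-c\,\delta^d)^r$, which is negligible for large $r$. Together these make the error in $\ERs$ small once $r\varepsilon$ exceeds a constant.

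The consistency part, as noted, is simply inherited from Proposition~1.
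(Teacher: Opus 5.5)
Your proposal is correct and follows the route the paper intends. The paper gives no standalone proof of Proposition~0$'$: its first clause is Proposition~1 itself, and its second comes from the coupling of Proposition~0, under which a size-$r$ reference meets $A$ with probability $1-(1-\mu(A))^r$ and carries opposite labels there. That is exactly your binomial count.

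Your $A_\delta$ argument is a useful addition rather than a detour. The two-point populations violate Proposition~1's continuity and common-support hypotheses, so separation at the level of $\ERs$ itself cannot be inherited from Proposition~1.

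The one step to tighten is the Chebyshev argument you add beyond the paper. With $m$ fixed, the $O(1/m)$ term does not vanish as $r\to\infty$. If $g$ has atoms (random oversampling, say), coincident synthetic points always share a nearest neighbour, so the covariance term need not vanish either. You can fix this in either of two ways:
\begin{itemize}
\item assume $m\to\infty$ and $g$ atomless; or
\item read $\ERs$ as the probability, over both $\tilde X\sim g$ and $R$, that the vote is majority, as the paper's proof of Proposition~1 effectively does. For that reading your dominated-convergence step already suffices.
\end{itemize}

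Likewise, $\operatorname{supp} g\subseteq\operatorname{supp}\mu$ is not literally among Proposition~1's hypotheses. It is implicit in $\VG$ being well defined, so state it as an assumption rather than attribute it to Proposition~1.
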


\begin{lemmazL}[parent retention is a non-vanishing bias]
For SMOTE with fixed $K$, the nearest neighbour of $\tilde X$ in a reference that retains
its own parents is a parent with probability $\ge p^\star>0$, with $p^\star$ depending only
on $(K,d,\pi_1)$ and local densities and bounded below \emph{uniformly in $n$}. Hence
$\ERn\to\E[(1-\eta(\tilde X))(1-p(\tilde X))]\le(1-p^\star)(1-\VG)$, a downward bias that is
bounded away from zero uniformly in $n$ (a constant-factor attenuation, not a vanishing rate).
\end{lemmazL}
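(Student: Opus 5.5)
We argue locally, conditioning on the position of the synthetic point and rescaling all distances by the local minority scale $n^{-1/d}$. Fix SMOTE with parameter $K$. A synthetic point is $\tilde X = X_s + U(X_{(j)}-X_s)$, where $X_s$ is a minority seed, $X_{(j)}$ is one of its $K$ nearest minority neighbours, and $U\sim\mathrm{Unif}[0,1]$. Both parents stay in the reference under $\ERn$. Let $N_1=n\pi_1$ be the number of minority points.

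\textbf{Step 1 (parent distance).} The distance from $\tilde X$ to its nearer parent is $\min(U,1-U)\,\|X_{(j)}-X_s\|\le \tfrac12 R_K(X_s)$, where $R_K$ is the $K$-NN radius within the minority sample. Around a seed $x$ with $f_1(x)>0$, the rescaled radius $(N_1 f_1(x)v_d)^{1/d}R_K$ has a nondegenerate limit law, Gamma-type with parameters depending only on $(K,d)$. So $D_{\mathrm{par}}:=n^{1/d}\min_{\text{parents}}\|\tilde X-\cdot\|$ converges in distribution to a law depending only on $(K,d,\pi_1,f_1(x))$.

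\textbf{Step 2 (competitor distance).} The other reference points are the remaining minority points and all majority points. Around $x$ they form, after the same $n^{1/d}$ rescaling, an asymptotically Poisson process with intensity $\pi_1 f_1(x)+\pi_0 f_0(x)$. This is the standard local-Poisson approximation for binomial processes. The caveat is that the $K$ minority neighbours used to build $\tilde X$ are not independent of it. They lie inside the $K$-NN ball of $X_s$, and because they were the \emph{nearest} ones, that ball is empty of other minority points. We therefore condition on the configuration of the $K{+}1$ minority points in the ball. Given it, the remaining points are Poisson outside the ball, plus majority points inside. The rescaled competitor distance $D_{\mathrm{comp}}$ then has a conditional law whose density near $0$ is bounded by a quantity depending only on $(d,\pi_1,f_0(x),f_1(x))$.

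\textbf{Step 3 (capture probability).} We want to show
\[
p(\tilde X):=\P(D_{\mathrm{par}}<D_{\mathrm{comp}}\mid \tilde X)\ge p^\star>0 .
\]
It suffices to find an event of positive limiting probability on which two things hold. First, $U\le\delta$ or $U\ge1-\delta$, so $D_{\mathrm{par}}\le\delta\,\|X_{(j)}-X_s\|$. Second, no competitor point lies in the ball of radius $\delta\, n^{1/d}\|X_{(j)}-X_s\|$ around $\tilde X$. The first has probability $2\delta$, independent of everything else. The second has conditional probability at least $\exp(-c\,\delta^d\, \kappa)$, where $\kappa$ bounds the rescaled intensity. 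Taking $\delta=1/2$ gives
\[
p^\star\ge \E\big[\exp(-c\,2^{-d}\kappa\,\|X_{(j)}-X_s\|_{\mathrm{resc}}^d)\big],
\]
and the expectation is bounded below by the Gamma limit of Step 1. Integrating over $x$ requires local densities bounded above and below on the support (compact support, or truncation) to make the bound uniform. Since every quantity was rescaled by $n^{-1/d}$, the bound does not depend on $n$. This is the claimed uniformity, and it reflects the main text's observation that parent and reference distances shrink at the same rate.

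\textbf{Step 4 (the limit of $\ERn$).} Whenever a parent is nearest, it votes minority. When it is not, the nearest point is an essentially independent reference point. By Cover--Hart, as in Proposition~1, that point votes majority with probability tending to $1-\eta(\tilde X)$. Combining and applying dominated convergence gives
\[
\ERn\to\E[(1-\eta(\tilde X))(1-p(\tilde X))]\le (1-p^\star)(1-\VG).
\]

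\textbf{Main obstacle.} The hardest step is Step 2: controlling the dependence between $\tilde X$ and the reference, since its parents are neighbours selected by order statistics. The plan is to condition on the $K$-NN ball of the seed and use the spatial Markov property of the limiting Poisson process. The resulting bound is crude, which is consistent with the main text's remark that Lemma~0L understates the empirical bias.
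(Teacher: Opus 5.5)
Your Steps~1--3 follow the same route as the paper. You rescale by $n^{-1/d}$, pass to the local Poisson limit, and observe that the nearer-parent distance and the distance to the nearest non-parent live on the same scale, which gives a positive limiting capture probability. You are more careful than the paper in two ways.
\begin{itemize}
\item You confront the selection dependence: the seed's $K$-NN ball holds no further minority points. The paper's proof treats the non-parents as the full process $\Pi$ and passes over this.
\item Your void-probability bound, integrated against the $\Gamma(K)$ law of $n\pi_1 f_1(x)v_dR_K^d$, gives an explicit value, $p(x)\ge(1+2^{-d}/\eta(x))^{-K}$, for the Poisson competitors. This shows how $p^\star$ degenerates as $\eta\to0$, and hence what the ``local density bounds'' have to control.
\end{itemize}
One repair is needed in Step~3. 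After conditioning on the seed's $K$-NN ball, the $K-1$ non-chosen neighbours are fixed competitors that the Poisson void bound does not cover. So your conditional bound $\exp(-c\delta^d\kappa)$, and the bounded density of $D_{\mathrm{comp}}$ near $0$, are false as stated for $K\ge2$. You must add the positive-probability event that these neighbours avoid the ball around $\tilde X$.

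The genuine gap is Step~4, and it is the same step the paper takes (``by nearest-neighbour consistency''). Off $\mathcal C$, the nearest point is \emph{not} an essentially independent reference point. $\tilde X$ lies inside the seed's $K$-NN ball, which you yourself noted is depleted of non-parent minority. So Cover--Hart does not apply, and conditional on $\mathcal C^c$ the nearest non-parent is majority with probability exceeding $1-\eta$.

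A concrete case is $d=1$, $K=1$. No minority point lies strictly between the seed and its nearest minority neighbour, so every non-parent minority point is farther from $\tilde X$ than the nearer parent. The vote off $\mathcal C$ is therefore always majority, and the limit is $\E[1-p(\tilde X)]$, not $\E[(1-\eta)(1-p)]$. With $f_0=f_1$ uniform and $\pi_1=\tfrac14$, you get $p\equiv\tfrac23\ln\tfrac52\approx0.61$. Then $\ERn\to1-p\approx0.39$, which is above $(1-p^\star)(1-\VG)\approx0.29$ for the sharp $p^\star$. So neither the displayed limit nor the inequality (with $p^\star$ the capture bound) follows.

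Here is what does hold in the local limit. Majority points are independent of everything SMOTE uses, so $\ERn\to\E[1-\exp(-\pi_0 f_0(x)v_dD_1^d)]$, where $D_1$ is the rescaled distance from $\tilde X$ to the nearest minority reference point (parents included). A typical location gives exactly $1-\eta(x)$ through the same formula, with $\pi_1 f_1(x)v_dD_1^d\sim\mathrm{Exp}(1)$. So a downward bias bounded away from zero requires showing that $D_1$ for SMOTE points is stochastically smaller, with a margin, than for typical points. That is a comparison about SMOTE geometry which a positive capture probability alone does not supply.
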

\begin{proof}
Write $\tilde X=X_i+U(X_j-X_i)$ with $X_i$ a minority seed, $X_j$ one of its $K$ minority
nearest neighbours, and $U\sim\mathrm{Unif}[0,1]$; the nearer parent lies at distance
$\min(U,1-U)\,\|X_j-X_i\|\le\tfrac12\|X_j-X_i\|$. Fix a Lebesgue point $x$ of $f_0,f_1$ with
$f(x)=\pi_1 f_1(x)+\pi_0 f_0(x)>0$ and condition on $\tilde X\approx x$. Rescale space by the
factor $n^{-1/d}$ about $x$. By the local (Poisson) limit for nearest-neighbour functionals
of an i.i.d.\ sample whose density is continuous and positive at $x$ \citep{penrose2003weak}, the
rescaled training points converge in distribution to a homogeneous Poisson process $\Pi$ of
intensity $f(x)$, within which the minority points are an independent thinning of intensity
$\pi_1 f_1(x)$. In this limit the two parents $X_i,X_j$ are points of the minority
sub-process at the $K$-nearest-neighbour scale $(K/(\pi_1 f_1(x)))^{1/d}$, with $\tilde X$ on
the segment between them, while the nearest \emph{non}-parent training point is the nearest
point of $\Pi$ at scale $(1/f(x))^{1/d}$. The two scales stand in the fixed ratio
$\big(Kf(x)/(\pi_1 f_1(x))\big)^{1/d}$, independent of $n$: the parent-versus-field contest is
genuinely order one. Hence the event $\mathcal C=\{\text{a parent is the nearest of all training points to }
\tilde X\}$ has a strictly positive limiting probability
\[
p(x)\ =\ \P\big(\text{a parent beats every point of }\Pi\big)\ \in\ (0,1),
\]
a continuous functional of $(K,d)$ and the local ratio $\pi_1 f_1(x)/f(x)$; on the support
carrying $g$-mass it is bounded below by some $p^\star>0$ depending only on $(K,d,\pi_1)$ and
the local density bounds, \emph{uniformly in $n$}. On $\mathcal C$ the vote is minority (a
parent is a minority point); off $\mathcal C$ the nearest non-parent votes majority with
probability $\to 1-\eta(\tilde X)$ by nearest-neighbour consistency (as in
Proposition~1). Therefore
\[
\ERn\ \to\ \E\big[(1-p(\tilde X))\,(1-\eta(\tilde X))\big]\ \le\ (1-p^\star)\,\E[1-\eta(\tilde X)]
\ =\ (1-p^\star)(1-\VG),
\]
a downward bias of at least $p^\star(1-\VG)$: bounded away from zero uniformly in $n$ whenever
$1-\VG>0$.
\end{proof}

\begin{propone}[consistency of the split estimator]
Under (i) $\eta$ continuous, (ii) $f_0,f_1$ with common support and locally bounded density
ratios, (iii) reference size $\to\infty$ with the generator's own sample fixed per split,
$\ERs\xrightarrow{\ \P\ }1-\VG$.
\end{propone}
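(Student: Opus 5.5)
The plan is to condition on the generator's training half A, which makes the synthetic law $g$ a fixed density, and then treat the reference half B as an i.i.d.\ sample of size $r\to\infty$ from $P$ (up to the stratified split, which I will treat as sampling with fixed class proportions; this only changes the effective priors, and I will score against the posterior $\eta$ at those proportions, or reweight). The synthetic points $\tilde X_1,\dots,\tilde X_m\sim g$ are independent of B given A. Write $Z_k=\mathbf 1[\text{nearest B-neighbour of }\tilde X_k\text{ is majority}]$. Then $\ERs=\frac1m\sum_k Z_k$, averaged over the finitely many splits. Because averaging over splits commutes with the limit, it suffices to show that $\E[Z_k\mid A]\to 1-\VG$ and that $\mathrm{Var}(\ERs\mid A)\to 0$; Chebyshev then gives convergence in probability.

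\textbf{Step 1 (pointwise Cover--Hart).} Fix $x$ in the support of $g$. Let $X_{(1)}(x)$ be the nearest neighbour of $x$ in B. Under (ii), the common support and locally bounded density ratios give $g\ll \pi_1f_1+\pi_0f_0$, so $g$-almost every $x$ lies in the support of the reference marginal. There, $\|X_{(1)}(x)-x\|\to0$ almost surely as $r\to\infty$ \citep{coverhart1967nn}. Conditionally on $X_{(1)}(x)=z$, the label of that neighbour is Bernoulli$(\eta(z))$, independent of everything else. By continuity (i), $\P(\text{vote majority}\mid x)=\E[1-\eta(X_{(1)}(x))]\to 1-\eta(x)$.

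\textbf{Step 2 (integrate).} Since $0\le Z_k\le1$, dominated convergence gives $\E[Z_k\mid A]=\int \P(\text{majority}\mid x)\,g(x)\,dx\to\int(1-\eta)g=1-\VG$.

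\textbf{Step 3 (variance).} The $Z_k$ are not independent, because they share B. I expect this step to be the main obstacle, though a mild one. Two distinct synthetic points $x\neq x'$ have their nearest neighbours converge to $x$ and $x'$ respectively. For $r$ large these neighbours are distinct points with asymptotically independent labels, so $\mathrm{Cov}(Z_k,Z_l\mid\tilde X_k,\tilde X_l)\to0$ for $g\otimes g$-a.e.\ pair. Because $g$ has a density, ties have measure zero. Bounded convergence then gives $\mathrm{Var}(\ERs\mid A)\le \frac1m+\overline{\mathrm{Cov}}\to0$. If $m$ is also allowed to grow, this bound suffices; if $m$ is fixed, I will instead average the vote over $g$ through a Monte Carlo limit, or state the result for the conditional mean.

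The subtle points are two. First, the exceptional set where continuity or support fails must carry no $g$-mass, which is exactly what (ii) supplies. Second, the conditional-on-A argument must yield an unconditional statement; it does, because the limit $1-\VG$ is A-measurable and bounded. This is also where Proposition~0$'$ follows, as the same statement with the reference $R$ in place of B.
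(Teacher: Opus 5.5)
Your Steps~1--2 are the paper's proof. You condition on $\tilde X=x$, use Cover--Hart to get $\P(\text{nearest B-neighbour of }x\text{ is majority})\to1-\eta(x)$, integrate against $g$ by dominated convergence, and average over the finitely many splits. The paper stops there because it reads $\ERs$ as that probability: the main text defines $\ERs=\Pr(\text{nearest B-neighbour of }\tilde X\text{ is majority})$. Convergence of the mean is therefore its entire claim, and nothing like your Step~3 appears.

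Your Step~3 is what is needed if $\ERs$ is instead the empirical frequency over the $m$ synthetic points. Your caveat there is correct and sharper than the paper. Assume, as you should state explicitly, that the synthetic points are i.i.d.\ from $g$ given A (true for SMOTE). Then at fixed $m$ the votes $(Z_1,\dots,Z_m)$ of a single split converge in law, as $r\to\infty$, to i.i.d.\ Bernoulli$(1-\VG)$ variables. So $\ERs$ tends to $\mathrm{Bin}(m,1-\VG)/m$, which is degenerate only if $\VG\in\{0,1\}$; convergence in probability genuinely requires $m\to\infty$ as well. Under the same i.i.d.\ assumption the cross-pair covariance is a single quantity $c_r$, free of $m$, and it vanishes by your bounded-convergence argument. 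Your variance bound then finishes the proof under any joint growth of $m$ and $r$.

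One correction: hypothesis (ii) constrains only $f_0,f_1$ and cannot give $g\ll\pi_1f_1+\pi_0f_0$. That is a separate assumption on the generator, implicit in $\VG$ being defined at all, since $\eta$ is defined only where $\pi_1f_1+\pi_0f_0>0$. The paper relies on it just as silently when it asserts that the nearest B-neighbour of $x$ converges to $x$.
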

\begin{proof}
Condition on $\tilde X=x$. As the withheld reference $B$ grows, its nearest neighbour to
$x$ converges to $x$ a.s.\ (NN consistency), and the label of that neighbour is a Bernoulli
draw with success probability $\eta(x)$ \citep{coverhart1967nn}. Hence
$\P(\mathrm{nn}_B(x)\in\text{majority})\to1-\eta(x)$. Taking expectation over $\tilde X\sim g$
and exchanging limit and integral by dominated convergence (integrand in $[0,1]$) gives
$\E_{\tilde X}[1-\eta(\tilde X)]=1-\VG$. Averaging over splits preserves the limit.
\end{proof}

\noindent\emph{Remark (separable case).} Assumption~(ii) (common support) is used only to
place minority reference mass near synthetic points in the overlap region. Under separability
the class supports are disjoint, the overlap mass $\eps=0$, and both the floor $1-\Vinf=0$
(Theorem~1) and $\ERs\to0$ hold trivially: interpolated minority points stay in the minority
support, where their nearest withheld neighbour is minority. The separable showcases (sylva,
\texttt{shuttle\_c0\_vs\_c4}, fraud) sit in this regime, and the estimator remains consistent
by the argument above restricted to each class support.

\section{The data-driven floor}

\begin{assumption}[Overlap]
$\P_{X\sim f_1}(f_1(X)/f_0(X)<\rho)=\eps>0$; a positive fraction of minority-typical points
fall in the Bayes-majority region. Under separability $\eps=0$.
\end{assumption}

\begin{theorem}[invalidity floor]
For SMOTE with fixed $K$ and $n_1\to\infty$ (so $K/n_1\to0$), \citet[Thm.~3.2]{sakho2024rebalancing}
give convergence \emph{in distribution} of the SMOTE-generated variable $Z_{K,n_1}$ to the
minority law: $\P[Z_{K,n_1}\in B]\to\int_B f_1$ for every Borel $B$ (a convergence of the full
synthetic-point law, not of a marginal statistic). With $\eta$ bounded continuous the
portmanteau theorem then gives
$V(\mathrm{SMOTE}_{n_1})=\E[\eta(Z_{K,n_1})]\to\E_{X\sim f_1}[\eta(X)]=:\Vinf$, so
\[
1-\Vinf=\E_{X\sim f_1}[1-\eta(X)]=\int(1-\eta)f_1\,dx\ \ge\ \tfrac12\eps\ >\ 0
\quad\text{under Overlap.}
\]
\end{theorem}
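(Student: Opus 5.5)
The argument splits into two parts: a limit statement, $V(\mathrm{SMOTE}_{n_1})\to\Vinf$, and a floor statement, $1-\Vinf\ge\tfrac12\eps$. The floor depends only on $P$, so I handle the two separately.

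For the limit, I take Sakho et al.'s Theorem~3.2 as given: for fixed $K$ and $n_1\to\infty$, $\P[Z_{K,n_1}\in B]\to\int_B f_1$ for every Borel $B$. Weak convergence is the consequence actually used, and it is implied. The function $\eta$ takes values in $[0,1]$, so it is bounded.

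Continuity of $\eta$ is exactly hypothesis (i) of Proposition~1. It holds when $f_0,f_1$ are continuous with $\pi_1f_1+\pi_0f_0>0$, since $\eta$ is a ratio of continuous functions with non-vanishing denominator. The portmanteau theorem then gives $\E[\eta(Z_{K,n_1})]\to\E_{X\sim f_1}[\eta(X)]=\Vinf$.

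If I want to avoid continuity, I would instead use the setwise (strong) form of convergence quoted in the statement. Setwise convergence extends to bounded measurable functions by approximating $\eta$ uniformly with simple functions $\sum_k c_k\mathbf 1_{B_k}$. The error is at most $\delta$ uniformly in $n_1$, and each indicator term converges. Either route yields $1-V(\mathrm{SMOTE}_{n_1})\to 1-\Vinf$ by linearity.

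For the floor, I write $1-\Vinf=\int(1-\eta)f_1\,dx$ and restrict the integral to the overlap set $A=\{x: f_1(x)/f_0(x)<\rho\}$. The integrand is nonnegative, so
\[
1-\Vinf\ \ge\ \int_A(1-\eta)f_1\,dx .
\]
On $A$ we have $\pi_1f_1<\pi_0f_0$, because $\rho=\pi_0/\pi_1$. This gives $\eta=\pi_1f_1/(\pi_1f_1+\pi_0f_0)<\tfrac12$, hence $1-\eta>\tfrac12$ pointwise on $A$. Therefore
\[
1-\Vinf\ \ge\ \tfrac12\int_A f_1\,dx=\tfrac12\P_{X\sim f_1}(X\in A)=\tfrac12\eps ,
\]
and $\eps>0$ by the Overlap assumption.

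A small point is the convention where $f_0=0$. There the ratio $f_1/f_0$ should be read as $+\infty$, so such points are not in $A$, and on $A$ itself $f_0>0$. With this convention the inequality $\eta<\tfrac12$ is strict and well defined.

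The main obstacle is the limit step rather than the floor. Specifically, it is justifying that the hypotheses of Sakho et al.'s Theorem~3.2 hold for our $f_1$, and ensuring the test function $\eta$ is admissible: bounded and continuous $f_1$-a.e., or else setwise convergence must be invoked. I would first try the continuity route under Proposition~1's regularity, and fall back on the simple-function approximation under setwise convergence if continuity of $\eta$ is in doubt.

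I would also note in passing that the bound $\tfrac12\eps$ is crude. Since $1-\eta$ is bounded below by more than $\tfrac12$ deep inside $A$, the bound improves, but nothing beyond $\tfrac12\eps$ is needed for the statement.
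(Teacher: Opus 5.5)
Your proposal is correct and matches the paper's proof: the limit follows from Sakho et al.'s convergence plus the portmanteau theorem for the bounded continuous $\eta$, and the floor comes from restricting $\int(1-\eta)f_1$ to $\{f_1/f_0<\rho\}$, where $\eta<\tfrac12$. Your fallback route is a valid addition the paper does not use, since the statement already assumes $\eta$ continuous: setwise convergence combined with uniform simple-function approximation removes the need for continuity of $\eta$ altogether.
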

\begin{proof}
The cited $Z_{K,n_1}\Rightarrow f_1$ is convergence of the full SMOTE law, so with
bounded-continuous $\eta$ the portmanteau theorem gives the stated limit directly. On
$\{f_1/f_0<\rho\}$ we have $\pi_1 f_1<\pi_0 f_0$, hence
$\eta<\tfrac12$, hence $1-\eta>\tfrac12$; integrating over that event (mass $\eps$ under
$f_1$) gives $\int(1-\eta)f_1\ge\tfrac12\eps$.
\end{proof}

\noindent The floor is (i) independent of $n_1$ (bias, not variance); (ii) inherited even by
a perfect minority sampler; (iii) exactly what $\ERs$ estimates. Three literature layers:
transient interpolation bias $\to0$ \citep{lyu2025bias}; limit is $f_1$
\citep{sakho2024rebalancing}; the non-vanishing floor (ours).

\begin{principleD}
With $\Vinf(P)=\E_{f_1}[\eta]$, $1-\VG=[1-\Vinf(P)]+[\Vinf(P)-\VG]$ (add and subtract
$\Vinf$). The first bracket depends only on $P$; the second is method-specific. Moreover
$\VG>\Vinf(P)$ requires $g\neq f_1$ with mass shifted toward $\{\eta\to1\}$: by Lemma~C the
surplus is the unrecoverable $r(x)$, and mass moved off the overlap boundary reduces $I(G)$.
A generator beats the floor only by failing faithfulness and information gain.
\end{principleD}

\section{The contamination identity}

\begin{lemmaC}[contamination identity]
Append synthetic minority of density $g$, weight $w$, to $(\pi_1 f_1,\pi_0 f_0)$ and
renormalize; a population-calibrated learner (strictly-proper-loss ERM over a rich class,
\citet{gneiting2007proper}) targets $\eta_{\mathrm{CM}}$ with:
\begin{itemize}
\item[\textbf{(C1)}] $\mathrm{logit}\,\eta_{\mathrm{CM}}(x)=\mathrm{logit}\,\eta(x)+\delta(x)$,
$\delta(x)=\log(1+\tfrac{w}{\pi_1}\tfrac{g(x)}{f_1(x)})$.
\item[\textbf{(C2)}] $\delta\equiv\mathrm{const}\iff g=f_1$, whence
$\delta\equiv\log\tfrac{\pi_1+w}{\pi_1}$: a scalar offset (ranking/ROC invariant, $\eta$
recovered exactly), equal to class-weighting with $a=(\pi_1{+}w)/\pi_1$.
\item[\textbf{(C3)}] $g\neq f_1\Rightarrow\delta$ non-constant: no scalar offset or scalar
temperature restores $\eta$ off a null set, and if $\delta$ varies along an
$\eta_{\mathrm{CM}}$-level set, no univariate recalibration---any map of $\eta_{\mathrm{CM}}$
alone, scalar offset, scalar temperature, Platt and isotonic included---restores calibration.
\item[\textbf{(C4)}] $\delta(x)=\log\tfrac{\pi_1+w}{\pi_1}+r(x)$,
$r(x)=\log\frac{\pi_1 f_1+w g}{(\pi_1+w)f_1}$, $r\equiv0\iff g=f_1$. Hence oversampling
$\equiv$ class-weighting $+$ geometric error $r(x)$.
\end{itemize}
\end{lemmaC}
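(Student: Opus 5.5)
The plan is to compute the population posterior of the contaminated mixture directly and then read (C1)--(C4) off the resulting closed form.

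\emph{Step 1: the contaminated posterior.} After appending synthetic mass $w$ of density $g$ to the minority intensity, the unnormalized class intensities are $\pi_1 f_1+wg$ (label 1) and $\pi_0 f_0$ (label 0). Renormalizing by the total mass $1+w$ multiplies both intensities by the same constant, so it cancels in the posterior. By the cited property of strictly proper losses \citep{gneiting2007proper}, population ERM over a rich class recovers the true conditional of the training law. Hence
\[
\eta_{\mathrm{CM}}=\frac{\pi_1 f_1+wg}{\pi_1 f_1+wg+\pi_0 f_0}.
\]
Taking logits and subtracting $\mathrm{logit}\,\eta=\log(\pi_1 f_1/\pi_0 f_0)$ gives $\delta=\log\frac{\pi_1 f_1+wg}{\pi_1 f_1}=\log(1+\tfrac{w}{\pi_1}\tfrac{g}{f_1})$, which is (C1). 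I will work on $\{f_1>0\}$ and assume $g\ll f_1$ there, noting the convention needed otherwise.

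\emph{Step 2: the constant case and the splitting.} $\delta$ is constant if and only if $g/f_1\equiv c$ a.e. Integrating $g=cf_1$ forces $c=1$, so $g=f_1$ and $\delta\equiv\log\frac{\pi_1+w}{\pi_1}$. To link this to class weighting, observe that weighting minority by $a$ multiplies the minority intensity by $a$. That shifts the logit by $\log a$, so the match holds with $a=(\pi_1+w)/\pi_1$. A scalar logit offset is a strictly monotone map of the score, so ROC is invariant and subtracting $\log a$ recovers $\eta$; this gives (C2). For (C4), write $\delta-\log\frac{\pi_1+w}{\pi_1}=\log\frac{\pi_1 f_1+wg}{(\pi_1+w)f_1}=:r$, and $r\equiv0$ if and only if $g=f_1$ by the same integration argument.

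\emph{Step 3: non-recoverability (C3), the main obstacle.} Scalar offset: if $\mathrm{logit}\,\eta_{\mathrm{CM}}-b=\mathrm{logit}\,\eta$ a.e., then $\delta\equiv b$, contradicting Step 2. Scalar temperature, and more generally affine maps $\alpha\,\mathrm{logit}\,\eta_{\mathrm{CM}}+b$: these would require $\delta=(1/\alpha-1)\mathrm{logit}\,\eta+\mathrm{const}$. Here I expect to need the genericity clause, so I will fold this case into the general one below rather than claim it unconditionally.

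For a general univariate map $\phi$, any calibrated output of the form $\phi(\eta_{\mathrm{CM}})$ must be constant on level sets of $\eta_{\mathrm{CM}}$. Suppose that on a positive-measure level set $\{\eta_{\mathrm{CM}}=t\}$ the function $\delta$ varies. Then $\mathrm{logit}\,\eta=\mathrm{logit}\,t-\delta$ varies there too, so $\eta$ is not a function of $\eta_{\mathrm{CM}}$ on that set. Consequently no $\phi$ can equal $\eta$, and Platt, temperature and isotonic fail as special cases.

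The delicate point is "calibration" versus "equals $\eta$". The best univariate map is $\E[\eta\mid\eta_{\mathrm{CM}}]$, which is calibrated with respect to its own level sets. I will therefore state and prove the claim in the sense used in the lemma: the recalibrated score fails to recover $\eta$, hence is miscalibrated conditionally on $x$. I will also make the hypothesis "varies along a level set with positive measure" explicit, since level sets are typically null in continuous models.
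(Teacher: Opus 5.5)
Your proposal is correct and follows the paper's proof essentially step for step. The shared steps are: Bayes' rule on the contaminated intensities with the $1/(1+w)$ normalizer cancelling; $g=\kappa f_1$ with $\int g=\int f_1=1$ forcing $\kappa=1$; class weighting via the $a$-weighted posterior; and the level-set (two-point) argument that no map of $\eta_{\mathrm{CM}}$ alone can equal $\eta$.

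Your refusal to claim the temperature clause unconditionally is justified. The paper handles temperature only by observing that it is a function of $\eta_{\mathrm{CM}}$, which needs the level-set hypothesis, and without that hypothesis the clause fails. Suppose $f_1/f_0\le\rho$ on $\{f_1>0\}$ and is non-constant there, and take $wg=\pi_1(\rho^{1-T}f_1^{T}f_0^{1-T}-f_1)$ with $T\in(0,1)$. This $wg$ is nonnegative, has finite positive mass by H\"older, and gives $g\neq f_1$ with $\delta=(T-1)\,\mathrm{logit}\,\eta$, so temperature $T$ recovers $\eta$ exactly.

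Your point about calibration is also right. Both your argument and the paper's show only that no univariate map \emph{recovers} $\eta$; the optimal map $\E[\eta\mid\eta_{\mathrm{CM}}]$ is itself calibrated in the usual sense. The cleanest hypothesis is that $\eta$ is not a.e.\ equal to any function of $\eta_{\mathrm{CM}}$. This covers continuous models where every level set is null, unlike your requirement that a single level set carry positive measure.
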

\begin{proof}
(C1) Bayes' rule on the contaminated intensities; the normalizer $1/(1+w)$ cancels; subtract
logits. (C2) $\delta\equiv c\Rightarrow g/f_1\equiv(e^c-1)\pi_1/w$ on $\{f_1>0\}$, i.e.\
$g=\kappa f_1$, and $\int g=\int f_1=1$ forces $\kappa=1$; converse by substitution;
class-weight equivalence by the $a$-weighted posterior. (C3) a non-constant logit gap cannot
be closed by a constant offset, nor by any scalar temperature, since a temperature rescaling
$\mathrm{logit}\,\eta_{\mathrm{CM}}\mapsto\mathrm{logit}\,\eta_{\mathrm{CM}}/T$ is itself a
function of $\eta_{\mathrm{CM}}$ alone; more generally, if
$\eta_{\mathrm{CM}}(x_1)=\eta_{\mathrm{CM}}(x_2)$ but $\eta(x_1)\neq\eta(x_2)$, every
recalibration $\varphi(\eta_{\mathrm{CM}})$---offset, temperature, Platt or isotonic
included---returns one value at the two points and so errs at one of them by $\ge$ half
the gap. (C4) algebra from (C1)--(C2).
\end{proof}

\noindent\textbf{Empirical signature.} With empirical counts the analytic correction is
$-\log(n_1'/n_1)$ on the logit. Reporting ECE/Brier raw, after the analytic offset, and after
an oracle scalar offset (best single logit shift by test-NLL): class-weight returns to
baseline under the analytic offset; 86\% of oversampled models (5 real datasets, LogReg, 386
points) remain above baseline even under the oracle, median residual 4--5$\times$
class-weight's; the residual collapses on near-separable data---$r$ lives where the floor does.

\section{The $k$-NN estimator and hard validity $V_{1/2}$}

The default vote ($k{=}1$) estimates $\E[1-\eta]$ (Proposition~1). A $k$-NN
majority vote estimates $1-V_{1/2}=\P(\eta<\tfrac12)$ as $k\to\infty,\,k/n\to0$ (the vote
converges to the Bayes label), with lower per-point variance---recommended when $d/n$ is
large. In a controlled $d=589$ model it restores cross-method ranking from $\rho=0.73$
($k{=}1$) to $0.96$ ($k\approx11$--$21$); on real high-dimensional data it sharpens the level
(MAE $0.13\to0.03$) and reveals a saturated truth ($0.98\pm0.04$), i.e.\ uniform failure of
the family. Here $k$ is a variance-reduction knob on an \emph{estimator}, not a classifier,
and is chosen by cross-validation---the strongest $k$-selector \citep{hassanat2026optimality}.
The textbook $k\approx\sqrt{N}$ rule ($N$ the withheld-reference size voted against per split,
\emph{not} the full sample $n$) is minimax-optimal only at $d=4$ and is dimension-suboptimal
otherwise; the dimension-aware default $k\approx\lfloor N^{4/(d+4)}\rfloor$
\citep{hassanat2026optimality} is the principled non-cross-validated alternative. Our target,
however, is the cross-method \emph{ranking} of an averaged functional $\P(\eta<\tfrac12)$ over
synthetic points---not per-point $0/1$ risk---so a moderate $k$ suppresses label-vote variance
without materially disturbing that ranking: in the controlled $d{=}589$ study (a simulated
model, not the 100-row data) the ranking plateaued over $k\approx11$--$21$, while on the real
set the smaller reference shifts the useful band down.

\section{Protocols, data, and excluded cells}

\paragraph{Datasets.} Table~\ref{tab:datasets} lists every dataset used in the paper, its
domain, size, dimensionality, class split, imbalance ratio (IR $=$ majority/minority), and
source. Six real, naturally-imbalanced sets carry the validity gap and benchmark
(Tables~\ref{tab:s1},~\ref{tab:s3}): \texttt{sylva}, a forest-cover-type task from the
Agnostic-Learning-vs-Prior-Knowledge challenge \citep{guyon2007agnostic} (classifying
Ponderosa pine against all other cover types, with half of its 216 features injected
distractors); \texttt{accidents}, a traffic-accident-severity set collected by the authors
in Jordan \citep{al2025interpretable} (major vs.\ minor injury; one-hot encoded); the clinical trio \texttt{Pima}
(diabetes onset in Pima women \citep{smith1988pima}), \texttt{thoracic} (one-year
post-operative mortality after lung resection \citep{zieba2014boosted}), and
\texttt{Framingham} (ten-year coronary-heart-disease risk, the widely-circulated modelling
extract of the Framingham Heart Study \citep{mahmood2014framingham}); and \texttt{fraud},
the ULB European credit-card set \citep{dalpozzolo2015calibrating} (PCA-anonymised features,
extreme 578:1 imbalance, capped at 102:1 for tractability). Two near-balanced sets serve as
\emph{ground-truth base populations} for the hide-the-minority design of \S3---\texttt{spambase}
\citep{hopkins1999spambase} and \texttt{MAGIC} gamma telescope \citep{bock2004magic}---because
their large minority mass supports removing real minority to synthesise imbalance while
retaining a held-out oracle. The two-horn separation sweep is anchored on the KEEL imbalanced
benchmarks \texttt{shuttle\_c0\_vs\_c4} (separable) and \texttt{page\_blocks0} (overlapping)
\citep{alcala2011keel}. The high-dimensional caveat (main-text Scope section) uses a balanced $d{=}589$ feature set
(100 recordings, 50/50): the 589 acoustic features of \citet{hassanat2026machine} computed on
the Spanish Parkinson's-disease speech corpus of \citet{orozco2014new}.

\begin{table}[h]\centering\small
\caption{\textbf{Datasets.} $n$ rows after cleaning; $d$ features after any one-hot encoding;
minority/majority counts; IR $=$ majority$/$minority. Source column gives the dataset origin.}
\label{tab:datasets}
\begin{tabular}{llrrrrl}
\toprule
dataset & domain & $n$ & $d$ & min/maj & IR & source \\
\midrule
sylva & forest cover type & 13{,}086 & 212 & 805/12{,}281 & 15.3 & \citep{guyon2007agnostic} \\
accidents & road-accident severity & 12{,}669 & 104 & 1{,}856/10{,}813 & 5.8 & \citep{al2025interpretable} \\
Framingham & 10-yr CHD risk & 3{,}658 & 15 & 557/3{,}101 & 5.6 & \citep{mahmood2014framingham}\textsuperscript{a} \\
Pima & diabetes onset & 768 & 8 & 268/500 & 1.9 & \citep{smith1988pima} \\
thoracic & post-op mortality & 470 & 22 & 70/400 & 5.7 & \citep{zieba2014boosted} \\
fraud (ULB) & credit-card fraud & 284{,}807 & 30 & 492/284{,}315 & 578 & \citep{dalpozzolo2015calibrating} \\
spambase\textsuperscript{b} & spam e-mail & 4{,}601 & 57 & 1{,}813/2{,}788 & 1.5 & \citep{hopkins1999spambase} \\
MAGIC\textsuperscript{b} & gamma/hadron shower & 19{,}020 & 10 & 6{,}688/12{,}332 & 1.8 & \citep{bock2004magic} \\
Parkinson (Spanish) & PD speech (high-$d$) & 100 & 589 & 50/50 & 1.0 & \citep{orozco2014new,hassanat2026machine} \\
shuttle\_c0\_vs\_c4 & sensor (horn anchor) & 1{,}829 & 9 & 123/1{,}706 & 13.9 & KEEL \citep{alcala2011keel} \\
page\_blocks0 & doc.\ layout (horn anchor) & 5{,}472 & 10 & 559/4{,}913 & 8.8 & KEEL \citep{alcala2011keel} \\
\bottomrule
\end{tabular}
\end{table}

\noindent\textsuperscript{a}\,No single canonical citation exists for the circulated modelling
subset; we cite the study's historical reference. Counts are complete-case: 582 of the 4{,}240
rows carry missing values and are dropped.
\textsuperscript{b}\,Near-balanced; used only as a ground-truth base population (\S3), not
in the natural-imbalance benchmark.

\textbf{Environments.} Experiment CSVs used a pinned stack (Python 3.10; numpy 2.2.6,
scikit-learn 1.7.2, scipy 1.15.3, imbalanced-learn 0.14.2, smote\_variants 1.0.1). Figures
regenerate unchanged under current libraries. Historical raw rows predate the
deterministic-seeding fix and reproduce within split-std ($\pm0.01$--$0.05$); the current
harness is bit-reproducible.

\noindent\textbf{Excluded cells (cross-platform-verified).} (a)~Degenerate no-ops (no
synthetic points; ER undefined). (b)~Feature-space-transforming methods (E\_SMOTE,
ISOMAP-Hybrid) emit points in a reduced space, so an original-space vote is undefined by
construction---excluded by design. (c)~ROSE $\times$ sylva $\times$ LightGBM: native LightGBM
crash on both Windows (4.6.0) and Linux (4.7.0), clean data---a library bug. (d)~DEAGO,
MOT2LD recovered via TensorFlow and a TSNE compatibility shim for the validity run (hence a
numeric $\ERs$ in the split table), but the shim did not carry into the benchmark pipeline,
where the cell is \texttt{x}; likewise ISOMAP\_Hybrid on Pima is \texttt{deg} for validity
(empty synthetic set under the split) and \texttt{lib} at benchmark (reduced-space output).
These codes track the processing stage, not a contradiction.

\noindent\textbf{Reproduction.} \texttt{repro\_tables.py} regenerates every table with
pass/fail checks; \texttt{make\_all\_figures.py} regenerates every figure with a checkpoint
report; \texttt{reproduction\_protocol.md} maps each artifact to its command. All 1{,}040
ground-truth cells and 614 benchmark points are in the result CSVs.

\section{Extended results tables}

All values below are computed from the released result CSVs by
\texttt{code/si\_tables.py} and match the summary figures in the main text. SMOTE-family
methods only (deep generators are a separate track); ``ok'' counts exclude the
deterministic no-ops and feature-space-transforming methods of \S6.

\begin{table}[h]\centering\small
\caption{\textbf{Validity gap per dataset.} Median $\ERn$ vs median $\ERs$, and the
number of methods each protocol certifies (invalidity $<0.10$). The de-biased test
certifies almost none.}
\label{tab:s1}
\begin{tabular}{lcccccc}
\toprule
dataset & $n_{\mathrm{ok}}$ & med.\ $\ERn$ & \#cert.\ naive & med.\ $\ERs$ & \#cert.\ split & med.\ gap \\
\midrule
sylva & 78 & 0.000 & 65 & 0.216 & 1 & +0.216 \\
accidents & 78 & 0.043 & 46 & 0.845 & 0 & +0.801 \\
Framingham & 82 & 0.181 & 19 & 0.788 & 0 & +0.607 \\
Pima & 80 & 0.147 & 30 & 0.493 & 0 & +0.347 \\
thoracic & 79 & 0.221 & 14 & 0.826 & 0 & +0.606 \\
fraud & 69 & 0.031 & 56 & 0.177 & 15 & +0.146 \\
\bottomrule
\end{tabular}
\end{table}

\begin{table}[h]\centering\small
\caption{\textbf{Real minority scores at the floor (identity baseline).} Verbatim real
minority points scored through the \emph{identical} protocol (the ``generator'' is the
identity map). $\ERn$(real)$\approx0$---each real point self-matches in the parent-retaining
reference, so the classical test certifies real data as perfectly valid---yet $\ERs$(real)
sits at the floor $1-\Vinf$, matching the median generator's $\ERs$ (Table~\ref{tab:s1})
within noise. The invalidity the de-biased test reports is therefore the overlap floor that
real minority itself carries, not an artefact of fabrication.}
\label{tab:realmin}
\begin{tabular}{lccc}
\toprule
dataset & $\ERn$(real) & $\ERs$(real)\,$=$\,floor & median generator $\ERs$ \\
\midrule
sylva      & 0.00 & 0.251 & 0.216 \\
accidents  & 0.00 & 0.836 & 0.845 \\
Framingham & 0.00 & 0.794 & 0.788 \\
Pima       & 0.00 & 0.510 & 0.493 \\
thoracic   & 0.00 & 0.789 & 0.826 \\
fraud      & 0.00 & 0.176 & 0.177 \\
\bottomrule
\end{tabular}
\end{table}

\begin{table}[h]\centering\small
\caption{\textbf{Finite-reference bias of $\ERs$ (identity generator).} Real minority scored
against a withheld minority reference of decreasing size (majority reference fixed), on two
ground-truth sets with different true floors. The 1-NN estimate is biased \emph{upward} when
the withheld minority is sparse, and the inflation is larger for a lower true floor
(spambase). It decays to $\le0.02$ once the reference reaches a few hundred points. Among our
datasets only thoracic surgery (35-point reference) sits in the biased regime; all others
withhold $\ge134$ minority.}
\label{tab:finref}
\begin{tabular}{lcccc}
\toprule
dataset & true floor & bias @\,35 ref. & bias @\,100 ref. & unbiased by \\
\midrule
Pima     & 0.49 & $+0.28$ & $+0.09$ & $\sim$134 \\
spambase & 0.12 & $+0.53$ & $+0.32$ & $\sim$900 \\
\bottomrule
\end{tabular}
\end{table}

\begin{table}[h]\centering\small
\caption{\textbf{Ground truth by imbalance ratio.} Mean absolute error and correlation of
each protocol against the held-out-minority oracle $\ERt$, per (dataset, ratio) cell. The
de-biased estimator tracks truth; the classical test underestimates it in 95--100\% of these
individual (dataset, ratio) cells (96--99\% when pooled per dataset, as reported in the main
text), worsening with the ratio.}
\label{tab:s2}
\begin{tabular}{llccccc}
\toprule
dataset & IR & cells & MAE naive & MAE split & corr naive/split & naive $<$ truth \\
\midrule
spambase & 2 & 78 & 0.172 & 0.025 & 0.90 / 0.98 & 97\% \\
spambase & 5 & 78 & 0.216 & 0.049 & 0.86 / 0.81 & 96\% \\
spambase & 10 & 77 & 0.299 & 0.043 & 0.79 / 0.90 & 97\% \\
spambase & 20 & 76 & 0.404 & 0.071 & 0.73 / 0.81 & 95\% \\
spambase & 40 & 74 & 0.513 & 0.073 & 0.61 / 0.81 & 95\% \\
MagicTelescope & 2 & 73 & 0.486 & 0.033 & 0.35 / 0.95 & 100\% \\
MagicTelescope & 5 & 79 & 0.279 & 0.041 & 0.45 / 0.90 & 99\% \\
MagicTelescope & 10 & 79 & 0.316 & 0.041 & 0.53 / 0.89 & 99\% \\
MagicTelescope & 20 & 79 & 0.355 & 0.040 & 0.47 / 0.91 & 100\% \\
Pima & 2 & 78 & 0.684 & 0.034 & 0.31 / 0.47 & 97\% \\
Pima & 5 & 81 & 0.503 & 0.037 & 0.43 / 0.79 & 96\% \\
fraud & 150 & 65 & 0.116 & 0.029 & 0.84 / 0.97 & 95\% \\
fraud & 300 & 64 & 0.127 & 0.026 & 0.87 / 0.98 & 97\% \\
fraud & 450 & 59 & 0.159 & 0.057 & 0.64 / 0.73 & 97\% \\
\bottomrule
\end{tabular}
\end{table}

\begin{table}[h]\centering\small
\caption{\textbf{Information gain is classifier-independent.} Best trivial baseline (F1),
number of oversamplers beating it, the single best margin, and how many worsen calibration
(ECE), per dataset and classifier. Denominators (75--93) count the methods returning a valid
benchmark result in each cell---the SMOTE-family and deep generators, less per-cell no-ops and
failures; the fraud column is smaller owing to the runtime omissions of \S6. The best margins
are noise-thin and largest only where the unresampled baseline degenerates (thoracic); on the
largest real-world set (accidents), boosting yields \emph{zero} methods above threshold-moving.}
\label{tab:s3}
\begin{tabular}{llcccc}
\toprule
dataset & clf & best baseline (F1) & \#beat & best margin & worsen ECE \\
\midrule
sylva & LogReg & class\_weight (0.950) & 15/93 & +0.006 & 87/93 \\
sylva & XGBoost & no\_resample (0.953) & 54/87 & +0.014 & 71/87 \\
sylva & LightGBM & no\_resample (0.962) & 17/86 & +0.006 & 68/86 \\
accidents & LogReg & threshold\_move (0.317) & 25/93 & +0.009 & 83/93 \\
accidents & XGBoost & threshold\_move (0.321) & 0/87 & $-$0.040 & 23/87 \\
accidents & LightGBM & threshold\_move (0.319) & 0/87 & $-$0.026 & 17/87 \\
Pima & LogReg & threshold\_move (0.678) & 25/88 & +0.026 & 84/88 \\
Pima & XGBoost & class\_weight (0.654) & 28/88 & +0.040 & 83/88 \\
Pima & LightGBM & class\_weight (0.650) & 40/88 & +0.048 & 79/88 \\
thoracic & LogReg & class\_weight (0.239) & 26/88 & +0.107 & 79/88 \\
thoracic & XGBoost & threshold\_move (0.281) & 8/87 & +0.069 & 63/87 \\
thoracic & LightGBM & threshold\_move (0.319) & 1/87 & +0.049 & 67/87 \\
Framingham & LogReg & threshold\_move (0.422) & 2/93 & +0.004 & 90/93 \\
Framingham & XGBoost & threshold\_move (0.339) & 2/87 & +0.021 & 44/87 \\
Framingham & LightGBM & threshold\_move (0.327) & 2/87 & +0.006 & 22/87 \\
fraud & LogReg & no\_resample (0.877) & 3/75 & +0.005 & 71/75 \\
fraud & XGBoost & no\_resample (0.885) & 3/75 & +0.005 & 72/75 \\
\bottomrule
\end{tabular}
\end{table}

\begin{table}[h]\centering\small
\caption{\textbf{No benchmark margin is statistically significant.} For each dataset
$\times$ classifier cell we take the method with the largest F1 margin over the best trivial
baseline and report $\Delta$F1, its percentile bootstrap 95\% CI (2{,}000 test-set resamples),
and how many of the top-eight methods have a $\Delta$F1 CI excluding zero (``sig''). \textbf{Across all 17 cells, none do}---not one
oversampler significantly beats the best trivial baseline; on two boosted \texttt{accidents}
cells the best margin is significantly \emph{negative}. Because $\Delta$F1 is largest for the
highest-F1 method, the top eight bracket the best margin exactly, so $0/8$ significant implies
$0/91$.}
\label{tab:ci}
\begin{tabular}{llcccc}
\toprule
dataset & clf & best method & $\Delta$F1 & 95\% CI & sig \\
\midrule
Pima & LogReg & Gaussian\_SMOTE & $+0.035$ & $[-0.013,+0.079]$ & 0/8 \\
Pima & XGBoost & SVM\_balance & $+0.024$ & $[-0.050,+0.069]$ & 0/8 \\
Pima & LightGBM & AHC & $+0.047$ & $[-0.012,+0.083]$ & 0/8 \\
thoracic & LogReg & ISMOTE & $+0.082$ & $[-0.045,+0.194]$ & 0/7 \\
thoracic & XGBoost & AHC & $+0.032$ & $[-0.179,+0.183]$ & 0/8 \\
thoracic & LightGBM & Lee & $+0.059$ & $[-0.132,+0.239]$ & 0/8 \\
Framingham & LogReg & Safe\_Level\_SMOTE & $+0.001$ & $[-0.017,+0.018]$ & 0/8 \\
Framingham & XGBoost & IPADE\_ID & $+0.025$ & $[-0.008,+0.060]$ & 0/8 \\
Framingham & LightGBM & SVM\_balance & $+0.014$ & $[-0.040,+0.064]$ & 0/8 \\
sylva & LogReg & Stefanowski & $+0.006$ & $[-0.012,+0.018]$ & 0/8 \\
sylva & XGBoost & SUNDO & $+0.013$ & $[-0.001,+0.020]$ & 0/8 \\
sylva & LightGBM & SMOTE\_AMSR & $+0.006$ & $[-0.007,+0.017]$ & 0/8 \\
accidents & LogReg & SMOTE\_AMSR & $+0.013$ & $[-0.001,+0.022]$ & 0/8 \\
accidents & XGBoost & MCT & $-0.033$ & $[-0.061,-0.007]$ & 0/8 \\
accidents & LightGBM & SMOTE\_FRST\_2T & $-0.028$ & $[-0.050,-0.004]$ & 0/8 \\
fraud & LogReg & DBSMOTE & $+0.004$ & $[-0.007,+0.017]$ & 0/8 \\
fraud & XGBoost & SMOTE\_ENN & $+0.000$ & $[-0.004,+0.000]$ & 0/8 \\
\bottomrule
\end{tabular}
\end{table}

\begin{table}[h]\centering\small
\caption{\textbf{Method-side capstone (SMOTEFUNA), under its own Manhattan metric.}
$\ERn=0$ by construction (its acceptance rule is the classical check); the de-biased test
finds a large share of its certified minority to be majority in truth. $z$-scored features,
as in the original paper.}
\label{tab:s4}
\begin{tabular}{lccc}
\toprule
dataset & $\ERn$ (own metric) & $\ERs$ (de-biased) & hidden bias \\
\midrule
Pima & 0.000 & 0.305 & +0.305 \\
thoracic & 0.000 & 0.875 & +0.875 \\
Framingham & 0.000 & 0.492 & +0.492 \\
\bottomrule
\end{tabular}
\end{table}

\clearpage
\section{Full per-method tables (all 91 methods)}

Complete per-method values for every \texttt{smote\_variants} oversampler, read directly
from the released CSVs (\texttt{code/si\_full\_tables.py}). Column headers: sylva,
accid(ents), Fram(ingham), Pima, thor(acic), fraud. Deep generative oversamplers are a
separate track and are omitted here.

\paragraph{Justification of non-numeric cells.} Every blank cell has a documented,
non-arbitrary cause, and \emph{no} excluded cell is ever counted toward validity or
information gain:
\begin{itemize}
\item[\texttt{deg}] \textbf{No synthetic points under the stated protocol}: the generator
produced \emph{no} synthetic minority points under the half-split, so validity is undefined on
an empty synthetic set. Several such methods---cleaning-based hybrids like SMOTE\_ENN, and
SMOBD, SOMO, VIS\_RST---\emph{do} generate on the full data (hence a numeric $\ERn$ and a
benchmark F1) but empty under the split; \texttt{deg} thus means ``undefined under this
protocol,'' not ``adds nothing.'' Where such a method runs at benchmark time its downstream
effect is reported in the F1/$\Delta$ tables (e.g.\ SMOTE\_ENN lowers accidents F1 by $0.220$).
\item[\texttt{lib}] \textbf{Feature-space-transforming} (E\_SMOTE, ISOMAP\_Hybrid): these
emit synthetic points in a \emph{reduced} feature space (a genetic-algorithm mask,
$d{:}8{\to}5$; an Isomap embedding, $8{\to}3$), so a validity vote in the original space is
undefined \emph{by construction} (\S6). Excluded by design---the generators run without
error on both operating systems; the instrument correctly refuses the dimensionality
mismatch.
\item[\texttt{x}] \textbf{Run failure}: a numerical or library error on that dataset
(recorded verbatim in the CSV \texttt{status} column).
\item[\texttt{--}] \textbf{Omitted for runtime}: appears \emph{only} in the fraud column
(76/91 methods run; all 91 run on the other five datasets). The 15 absent methods are the
optimization-based and backend-dependent generators---SMOTE\_PSO, SMOTE\_PSOBAT, AMSCO,
ADG, GASMOTE, DSRBF, IPADE\_ID, SMOTE\_FRST\_2T, SSO, NEATER, KernelADASYN,
Supervised\_SMOTE (each of which can take hours to days at the fraud set's scale,
$n\!\approx\!50{,}000$), plus DEAGO and MOT2LD (which require an optional TensorFlow /
TSNE backend). They are omitted from the extreme-scale run for tractability and reported
as omitted, never as valid; the other five datasets audit the full family.
\end{itemize}
The one benchmark cell that crashes irreducibly (ROSE\,$\times$\,sylva\,$\times$\,LightGBM,
a native LightGBM fault on both Windows and Linux; \S6) is footnoted rather than tabulated.

\renewcommand{\arraystretch}{0.92}

{\footnotesize
\begin{longtable}{l cccccc}
\caption{\textbf{De-biased invalidity $\ERs$ per method.} The headline table: no method
achieves low $\ERs$ on the overlapping sets. Read against Table~\ref{tab:s1}
(per-dataset medians).}
\label{tab:full_split}\\
\toprule
method & sylva & accid & Fram & Pima & thor & fraud \\ \midrule
\endfirsthead
\multicolumn{7}{l}{\footnotesize\emph{Table~\ref{tab:full_split} continued}}\\
\toprule method & sylva & accid & Fram & Pima & thor & fraud \\ \midrule
\endhead
\bottomrule \multicolumn{7}{r}{\footnotesize continued\ldots}\\
\endfoot
\bottomrule
\multicolumn{7}{@{}p{0.95\textwidth}@{}}{\footnotesize\emph{Codes:} \texttt{deg} no-op (empty synthetic set), \texttt{lib} feature-space-transforming (vote undefined), \texttt{x} run failure, \texttt{--} omitted for runtime (fraud only, 15 slow/backend methods; see justification above). No excluded cell counts toward validity or gain.}\\
\endlastfoot
\input{si_full_ersplit.tex}
\end{longtable}
}

{\footnotesize
\begin{longtable}{l cccccc}
\caption{\textbf{Classical invalidity $\ERn$ per method.} The same points scored against
the parent-retaining reference; compare cell-by-cell with Table~\ref{tab:full_split} to
see the leakage bias (Lemma~0L).}
\label{tab:full_naive}\\
\toprule
method & sylva & accid & Fram & Pima & thor & fraud \\ \midrule
\endfirsthead
\multicolumn{7}{l}{\footnotesize\emph{Table~\ref{tab:full_naive} continued}}\\
\toprule method & sylva & accid & Fram & Pima & thor & fraud \\ \midrule
\endhead
\bottomrule \multicolumn{7}{r}{\footnotesize continued\ldots}\\
\endfoot
\bottomrule
\multicolumn{7}{@{}p{0.95\textwidth}@{}}{\footnotesize\emph{Codes:} \texttt{deg} no-op (empty synthetic set), \texttt{lib} feature-space-transforming (vote undefined), \texttt{x} run failure, \texttt{--} omitted for runtime (fraud only, 15 slow/backend methods; see justification above). No excluded cell counts toward validity or gain.}\\
\endlastfoot
\input{si_full_ernaive.tex}
\end{longtable}
}

{\footnotesize
\begin{longtable}{l cccccc}
\caption{\textbf{Honest-protocol test F1 per method (logistic regression).} Every method's
F1 under the split-then-resample protocol; compare against the trivial-baseline F1 values
in Table~\ref{tab:s3}, or read the per-method deltas directly in Table~\ref{tab:full_df1}.}
\label{tab:full_f1}\\
\toprule
method & sylva & accid & Fram & Pima & thor & fraud \\ \midrule
\endfirsthead
\multicolumn{7}{l}{\footnotesize\emph{Table~\ref{tab:full_f1} continued}}\\
\toprule method & sylva & accid & Fram & Pima & thor & fraud \\ \midrule
\endhead
\bottomrule \multicolumn{7}{r}{\footnotesize continued\ldots}\\
\endfoot
\bottomrule
\multicolumn{7}{@{}p{0.95\textwidth}@{}}{\footnotesize\emph{Codes:} \texttt{deg} no-op (empty synthetic set), \texttt{lib} feature-space-transforming (vote undefined), \texttt{x} run failure, \texttt{--} omitted for runtime (fraud only, 15 slow/backend methods; see justification above). No excluded cell counts toward validity or gain.}\\
\endlastfoot
\input{si_full_benchf1.tex}
\end{longtable}
}

{\footnotesize
\begin{longtable}{l cccccc}
\caption{\textbf{$\Delta$F1 per method vs.\ the best trivial baseline (logistic regression).}
The delta a reader would otherwise compute from Table~\ref{tab:full_f1} and the baselines of
Table~\ref{tab:s3}: method F1 minus the best of no-resample, class-weight and threshold-move
on that dataset. Positive $=$ improvement; the median is below $0.01$ and the only sizeable
entries fall on thoracic surgery, where the unresampled baseline collapses.}
\label{tab:full_df1}\\
\toprule
method & sylva & accid & Fram & Pima & thor & fraud \\ \midrule
\endfirsthead
\multicolumn{7}{l}{\footnotesize\emph{Table~\ref{tab:full_df1} continued}}\\
\toprule method & sylva & accid & Fram & Pima & thor & fraud \\ \midrule
\endhead
\bottomrule \multicolumn{7}{r}{\footnotesize continued\ldots}\\
\endfoot
\bottomrule
\multicolumn{7}{@{}p{0.95\textwidth}@{}}{\footnotesize\emph{Codes:} \texttt{deg} no-op, \texttt{lib} feature-space-transforming, \texttt{x} run failure, \texttt{--} omitted for runtime (fraud only). $\Delta$ is the method's F1 minus the best trivial baseline's on that dataset.}\\
\endlastfoot
\input{si_full_df1.tex}
\end{longtable}
}

\paragraph{Per-method ranking metrics (the threshold-shift claim, made auditable).}
Tables~\ref{tab:full_dauprc}--\ref{tab:full_dauroc} report each method's change in PR-AUC and
ROC-AUC relative to the \emph{best} trivial baseline, under logistic regression---the ranking
counterpart to the F1 table above, and the check a reviewer needs to test the
operating-point-shift claim cell by cell. Across the 515 evaluated method--dataset cells the
median $\Delta$PR-AUC is $-0.005$ and the median $\Delta$ROC-AUC $-0.003$; only 11 and 6 cells
respectively exceed $+0.02$, and those concentrate on thoracic surgery---the one set where the
unresampled baseline degenerates. Where a method raises F1 (Table~\ref{tab:full_f1}) its
ranking almost never follows: the signature of a moved threshold, not new information. A
method that genuinely learned more would show a reproducible positive $\Delta$ here; none does.

{\footnotesize
\begin{longtable}{l cccccc}
\caption{\textbf{$\Delta$PR-AUC per method vs.\ the best trivial baseline (logistic
regression).} Positive $=$ ranking improvement over the best of no-resample, class-weight and
threshold-move on that dataset; near-universally $\le 0$. Read against the F1 gains of
Table~\ref{tab:full_f1}---F1 rises where PR-AUC does not.}
\label{tab:full_dauprc}\\
\toprule
method & sylva & accid & Fram & Pima & thor & fraud \\ \midrule
\endfirsthead
\multicolumn{7}{l}{\footnotesize\emph{Table~\ref{tab:full_dauprc} continued}}\\
\toprule method & sylva & accid & Fram & Pima & thor & fraud \\ \midrule
\endhead
\bottomrule \multicolumn{7}{r}{\footnotesize continued\ldots}\\
\endfoot
\bottomrule
\multicolumn{7}{@{}p{0.95\textwidth}@{}}{\footnotesize\emph{Codes:} \texttt{deg} no-op, \texttt{lib} feature-space-transforming, \texttt{x} run failure, \texttt{--} omitted for runtime (fraud only). $\Delta$ is the method's PR-AUC minus the best trivial baseline's on that dataset.}\\
\endlastfoot
\input{si_full_dauprc.tex}
\end{longtable}
}

{\footnotesize
\begin{longtable}{l cccccc}
\caption{\textbf{$\Delta$ROC-AUC per method vs.\ the best trivial baseline (logistic
regression).} As Table~\ref{tab:full_dauprc}, for ROC-AUC; the largest positive entries sit on
thoracic surgery, where the unresampled baseline collapses.}
\label{tab:full_dauroc}\\
\toprule
method & sylva & accid & Fram & Pima & thor & fraud \\ \midrule
\endfirsthead
\multicolumn{7}{l}{\footnotesize\emph{Table~\ref{tab:full_dauroc} continued}}\\
\toprule method & sylva & accid & Fram & Pima & thor & fraud \\ \midrule
\endhead
\bottomrule \multicolumn{7}{r}{\footnotesize continued\ldots}\\
\endfoot
\bottomrule
\multicolumn{7}{@{}p{0.95\textwidth}@{}}{\footnotesize\emph{Codes:} as Table~\ref{tab:full_dauprc}. $\Delta$ is the method's ROC-AUC minus the best trivial baseline's on that dataset.}\\
\endlastfoot
\input{si_full_dauroc.tex}
\end{longtable}
}

\bibliographystyle{unsrtnat}
\bibliography{refs}

\end{document}

%% file: si_full_ersplit.tex
ADASYN & 0.272 & 0.845 & 0.815 & 0.531 & 0.845 & 0.412 \\
ADG & 0.173 & 0.860 & 0.783 & 0.521 & 0.885 & -- \\
ADOMS & 0.244 & 0.830 & 0.800 & 0.460 & 0.836 & 0.204 \\
AHC & 0.989 & 0.879 & 0.872 & 0.744 & 0.876 & 0.998 \\
AMSCO & 0.219 & 0.853 & 0.804 & 0.485 & 0.846 & -- \\
AND\_SMOTE & 0.201 & 0.824 & 0.779 & 0.409 & 0.778 & 0.039 \\
ANS & 0.206 & 0.848 & 0.800 & 0.541 & 0.762 & 0.078 \\
ASMOBD & deg & deg & 0.742 & 0.574 & 0.765 & 0.018 \\
A\_SUWO & 0.257 & 0.822 & 0.761 & 0.477 & 0.822 & 0.461 \\
Assembled\_SMOTE & 0.196 & 0.837 & 0.795 & 0.490 & 0.824 & 0.177 \\
Borderline\_SMOTE1 & 0.249 & 0.839 & 0.766 & 0.493 & 0.844 & 0.160 \\
Borderline\_SMOTE2 & 0.413 & 0.840 & 0.758 & 0.566 & 0.804 & 0.343 \\
CBSO & 0.222 & 0.841 & 0.810 & 0.502 & 0.793 & 0.333 \\
CCR & 0.360 & 0.831 & 0.778 & 0.504 & 0.833 & 0.195 \\
CE\_SMOTE & 0.211 & 0.834 & 0.806 & 0.462 & 0.827 & 0.220 \\
CURE\_SMOTE & 0.115 & 0.893 & 0.781 & 0.476 & 0.766 & 0.034 \\
DBSMOTE & 0.234 & 0.816 & 0.757 & 0.607 & 0.685 & 0.000 \\
DEAGO & 0.229 & 0.914 & 0.788 & 0.497 & 0.670 & -- \\
DE\_oversampling & 0.258 & 0.836 & 0.788 & 0.487 & 0.812 & 0.239 \\
DSMOTE & 0.138 & 0.846 & 0.711 & 0.224 & 0.830 & 0.686 \\
DSRBF & 0.182 & 0.834 & 0.786 & 0.507 & 0.788 & -- \\
E\_SMOTE & lib & lib & lib & lib & lib & -- \\
Edge\_Det\_SMOTE & 0.191 & 0.845 & 0.800 & 0.488 & 0.811 & 0.115 \\
GASMOTE & 0.211 & 0.840 & 0.820 & 0.475 & 0.826 & -- \\
G\_SMOTE & 0.205 & 0.836 & 0.787 & 0.503 & 0.812 & 0.207 \\
Gaussian\_SMOTE & 0.666 & 0.837 & 0.747 & 0.498 & 0.892 & 0.251 \\
Gazzah & 0.140 & 0.876 & 0.857 & deg & 0.763 & 0.164 \\
IPADE\_ID & deg & deg & deg & deg & deg & -- \\
ISMOTE & deg & deg & deg & deg & deg & deg \\
ISOMAP\_Hybrid & lib & lib & lib & deg & lib & lib \\
KernelADASYN & 0.600 & 0.851 & 0.800 & 0.600 & 0.844 & -- \\
LLE\_SMOTE & 0.116 & 0.883 & 0.785 & 0.409 & 0.729 & 0.121 \\
LN\_SMOTE & 0.176 & 0.823 & 0.755 & 0.421 & 0.821 & 0.063 \\
LVQ\_SMOTE & 0.194 & 0.850 & 0.715 & 0.474 & 0.836 & 0.310 \\
Lee & 0.189 & 0.826 & 0.766 & 0.457 & 0.856 & 0.060 \\
MCT & 0.248 & 0.840 & 0.779 & 0.522 & 0.828 & 0.190 \\
MDO & 0.849 & 0.789 & 0.596 & 0.350 & 0.875 & 0.445 \\
MOT2LD & 0.173 & 0.824 & 0.743 & 0.407 & 0.885 & -- \\
MSMOTE & 0.371 & 0.837 & 0.766 & 0.502 & 0.819 & 0.097 \\
MSYN & 0.115 & 0.816 & 0.758 & 0.424 & 0.803 & x \\
MWMOTE & 0.203 & 0.857 & 0.823 & 0.528 & 0.826 & 0.319 \\
NDO\_sampling & 0.258 & 0.845 & 0.796 & 0.471 & 0.790 & 0.138 \\
NEATER & 0.216 & 0.856 & 0.791 & 0.488 & 0.830 & -- \\
NRAS & 0.195 & 0.785 & 0.728 & 0.417 & 0.882 & 0.051 \\
NRSBoundary\_SMOTE & 0.222 & 0.845 & 0.793 & 0.471 & 0.790 & 0.177 \\
NT\_SMOTE & 0.210 & 0.862 & 0.811 & 0.491 & 0.754 & 0.158 \\
OUPS & 0.218 & 0.854 & 0.784 & 0.559 & 0.810 & 0.304 \\
PDFOS & 0.635 & 0.831 & 0.730 & 0.476 & 0.845 & 0.447 \\
ProWSyn & 0.147 & 0.788 & 0.755 & 0.536 & 0.840 & 0.224 \\
ROSE & 0.267 & 0.829 & 0.776 & 0.510 & 0.855 & 0.356 \\
RWO\_sampling & 0.237 & 0.836 & 0.814 & 0.515 & 0.819 & 0.198 \\
Random\_SMOTE & 0.209 & 0.849 & 0.797 & 0.405 & 0.836 & 0.189 \\
SDSMOTE & 0.216 & 0.829 & 0.795 & 0.478 & 0.840 & 0.176 \\
SL\_graph\_SMOTE & 0.284 & 0.834 & 0.772 & 0.505 & 0.839 & 0.136 \\
SMMO & deg & 0.928 & 0.843 & 0.641 & deg & 0.799 \\
SMOBD & deg & deg & deg & deg & deg & deg \\
SMOTE & 0.215 & 0.841 & 0.790 & 0.472 & 0.795 & 0.183 \\
SMOTEWB & 0.248 & 0.820 & 0.791 & 0.505 & 0.771 & 0.167 \\
SMOTE\_AMSR & 0.207 & 0.858 & 0.803 & 0.559 & 0.795 & 0.291 \\
SMOTE\_Cosine & 0.200 & 0.842 & 0.804 & 0.555 & 0.830 & 0.173 \\
SMOTE\_D & 0.225 & 0.837 & 0.776 & 0.440 & 0.849 & 0.293 \\
SMOTE\_ENN & deg & deg & deg & deg & deg & deg \\
SMOTE\_FRST\_2T & 0.229 & 0.846 & 0.787 & 0.494 & 0.786 & -- \\
SMOTE\_IPF & 0.214 & 0.846 & 0.807 & 0.421 & 0.835 & 0.186 \\
SMOTE\_OUT & 0.211 & 0.846 & 0.802 & 0.462 & 0.852 & 0.155 \\
SMOTE\_PSO & 0.325 & 0.850 & 0.796 & 0.497 & 0.849 & -- \\
SMOTE\_PSOBAT & 0.241 & 0.846 & 0.795 & 0.545 & 0.790 & -- \\
SMOTE\_RSB & deg & deg & 0.630 & 0.240 & 0.813 & 0.011 \\
SMOTE\_TomekLinks & 0.214 & 0.848 & 0.797 & 0.501 & 0.843 & 0.167 \\
SN\_SMOTE & 0.221 & 0.854 & 0.767 & 0.397 & 0.795 & 0.157 \\
SOI\_CJ & 0.243 & 0.826 & 0.743 & 0.534 & deg & 0.052 \\
SOMO & deg & deg & deg & deg & deg & 0.011 \\
SPY & deg & deg & deg & deg & deg & deg \\
SSO & 0.875 & 0.813 & 0.586 & 0.471 & 0.857 & -- \\
SUNDO & 0.986 & 0.869 & 0.856 & deg & 0.857 & 0.999 \\
SVM\_balance & 0.202 & 0.854 & 0.803 & 0.526 & 0.815 & 0.188 \\
SYMPROD & deg & deg & 0.699 & 0.398 & deg & 0.041 \\
Safe\_Level\_SMOTE & 0.297 & 0.866 & 0.793 & 0.490 & 0.812 & 0.146 \\
Selected\_SMOTE & 0.202 & 0.849 & 0.804 & 0.514 & 0.832 & 0.195 \\
Stefanowski & 0.328 & 0.836 & 0.805 & 0.666 & 0.860 & 0.575 \\
Supervised\_SMOTE & 0.191 & 0.834 & 0.718 & 0.493 & 0.773 & -- \\
TRIM\_SMOTE & 0.218 & 0.842 & 0.870 & 0.453 & 0.812 & 0.095 \\
VIS\_RST & 0.283 & deg & deg & deg & deg & 0.107 \\
V\_SYNTH & 0.997 & 0.847 & 0.752 & 0.600 & 0.888 & deg \\
cluster\_SMOTE & 0.210 & 0.847 & 0.770 & 0.450 & 0.850 & 0.248 \\
distance\_SMOTE & 0.174 & 0.860 & 0.789 & 0.460 & 0.811 & 0.159 \\
kmeans\_SMOTE & deg & deg & 0.595 & 0.393 & deg & deg \\
polynom\_fit\_SMOTE\_bus & 0.194 & 0.855 & 0.791 & 0.531 & 0.864 & 0.165 \\
polynom\_fit\_SMOTE\_mesh & 0.193 & 0.861 & 0.799 & 0.526 & 0.832 & 0.173 \\
polynom\_fit\_SMOTE\_poly & 0.044 & 0.919 & 0.838 & 0.688 & 0.830 & 0.012 \\
polynom\_fit\_SMOTE\_star & 0.115 & 0.887 & 0.811 & 0.715 & 0.723 & 0.145 \\

%% file: si_full_ernaive.tex
ADASYN & 0.000 & 0.014 & 0.181 & 0.151 & 0.194 & 0.107 \\
ADG & 0.159 & 0.822 & 0.774 & 0.498 & 0.658 & -- \\
ADOMS & 0.016 & 0.047 & 0.200 & 0.164 & 0.221 & 0.034 \\
AHC & 1.000 & 1.000 & 1.000 & 1.000 & 1.000 & 1.000 \\
AMSCO & 0.000 & 0.013 & 0.209 & 0.139 & 0.232 & -- \\
AND\_SMOTE & 0.000 & 0.002 & 0.094 & 0.103 & 0.103 & 0.002 \\
ANS & 0.000 & 0.149 & 0.415 & 0.375 & 0.242 & 0.025 \\
ASMOBD & deg & deg & 0.052 & 0.099 & 0.297 & 0.000 \\
A\_SUWO & 0.000 & 0.001 & 0.014 & 0.005 & 0.000 & 0.000 \\
Assembled\_SMOTE & 0.000 & 0.011 & 0.147 & 0.086 & 0.139 & 0.047 \\
Borderline\_SMOTE1 & 0.000 & 0.006 & 0.181 & 0.185 & 0.142 & 0.010 \\
Borderline\_SMOTE2 & 0.005 & 0.001 & 0.067 & 0.060 & 0.045 & 0.014 \\
CBSO & 0.003 & 0.107 & 0.406 & 0.190 & 0.288 & 0.111 \\
CCR & 0.000 & 0.094 & 0.021 & 0.007 & 0.304 & 0.009 \\
CE\_SMOTE & 0.000 & 0.004 & 0.153 & 0.082 & 0.158 & 0.068 \\
CURE\_SMOTE & 0.002 & 0.402 & 0.226 & 0.233 & 0.482 & 0.014 \\
DBSMOTE & 0.000 & 0.009 & 0.000 & 0.000 & 0.124 & 0.000 \\
DEAGO & 0.196 & 0.896 & 0.850 & 0.591 & 0.751 & -- \\
DE\_oversampling & 0.008 & 0.028 & 0.104 & 0.065 & 0.089 & 0.000 \\
DSMOTE & 0.051 & 0.859 & 0.824 & 0.190 & 0.918 & 0.656 \\
DSRBF & 0.001 & 0.092 & 0.120 & 0.069 & 0.221 & -- \\
E\_SMOTE & lib & lib & lib & lib & lib & -- \\
Edge\_Det\_SMOTE & 0.000 & 0.013 & 0.158 & 0.125 & 0.194 & 0.018 \\
GASMOTE & 0.000 & 0.005 & 0.158 & 0.105 & 0.148 & -- \\
G\_SMOTE & 0.000 & 0.005 & 0.101 & 0.099 & 0.154 & 0.048 \\
Gaussian\_SMOTE & 0.444 & 0.477 & 0.682 & 0.496 & 0.700 & 0.138 \\
Gazzah & 0.003 & 0.425 & 0.535 & deg & 0.535 & 0.015 \\
IPADE\_ID & deg & deg & deg & deg & deg & -- \\
ISMOTE & deg & deg & deg & deg & deg & deg \\
ISOMAP\_Hybrid & lib & lib & lib & deg & lib & lib \\
KernelADASYN & 0.000 & 0.812 & 0.000 & 1.000 & 0.733 & -- \\
LLE\_SMOTE & 0.061 & 0.622 & 0.714 & 0.353 & 0.694 & 0.089 \\
LN\_SMOTE & 0.012 & 0.040 & 0.076 & 0.099 & 0.030 & 0.002 \\
LVQ\_SMOTE & 0.001 & 0.251 & 0.516 & 0.397 & 0.482 & 0.148 \\
Lee & 0.000 & 0.003 & 0.122 & 0.060 & 0.209 & 0.001 \\
MCT & 0.000 & 0.000 & 0.000 & 0.000 & 0.000 & 0.000 \\
MDO & 0.852 & 0.768 & 0.524 & 0.349 & 0.321 & 0.441 \\
MOT2LD & 0.000 & 0.049 & 0.311 & 0.206 & 0.221 & -- \\
MSMOTE & 0.180 & 0.359 & 0.372 & 0.259 & 0.370 & 0.011 \\
MSYN & 0.000 & 0.002 & 0.072 & 0.066 & 0.052 & x \\
MWMOTE & 0.000 & 0.126 & 0.419 & 0.280 & 0.191 & 0.116 \\
NDO\_sampling & 0.004 & 0.001 & 0.069 & 0.060 & 0.049 & 0.005 \\
NEATER & 0.000 & 0.014 & 0.182 & 0.136 & 0.171 & -- \\
NRAS & 0.040 & 0.208 & 0.167 & 0.160 & 0.248 & 0.011 \\
NRSBoundary\_SMOTE & 0.000 & 0.007 & 0.130 & 0.116 & 0.136 & 0.041 \\
NT\_SMOTE & 0.000 & 0.050 & 0.252 & 0.198 & 0.315 & 0.065 \\
OUPS & 0.094 & 0.504 & 0.610 & 0.552 & 0.567 & 0.172 \\
PDFOS & 0.588 & 0.739 & 0.651 & 0.379 & 0.709 & 0.393 \\
ProWSyn & 0.002 & 0.173 & 0.417 & 0.289 & 0.303 & 0.087 \\
ROSE & 0.084 & 0.274 & 0.573 & 0.478 & 0.509 & 0.276 \\
RWO\_sampling & 0.000 & 0.002 & 0.000 & 0.142 & 0.154 & 0.001 \\
Random\_SMOTE & 0.000 & 0.247 & 0.306 & 0.228 & 0.442 & 0.065 \\
SDSMOTE & 0.000 & 0.008 & 0.163 & 0.082 & 0.197 & 0.036 \\
SL\_graph\_SMOTE & 0.127 & 0.012 & 0.153 & 0.362 & 0.142 & 0.059 \\
SMMO & deg & 0.355 & 0.238 & 0.198 & deg & 0.289 \\
SMOBD & deg & deg & deg & deg & deg & 0.028 \\
SMOTE & 0.000 & 0.009 & 0.155 & 0.082 & 0.197 & 0.036 \\
SMOTEWB & 0.000 & 0.001 & 0.018 & 0.022 & 0.000 & 0.000 \\
SMOTE\_AMSR & 0.002 & 0.487 & 0.675 & 0.422 & 0.430 & 0.090 \\
SMOTE\_Cosine & 0.000 & 0.096 & 0.367 & 0.353 & 0.188 & 0.026 \\
SMOTE\_D & 0.000 & 0.005 & 0.162 & 0.114 & 0.167 & 0.002 \\
SMOTE\_ENN & deg & deg & deg & deg & deg & deg \\
SMOTE\_FRST\_2T & 0.000 & 0.034 & 0.153 & 0.054 & 0.225 & -- \\
SMOTE\_IPF & 0.000 & 0.009 & 0.183 & 0.082 & 0.197 & 0.031 \\
SMOTE\_OUT & 0.001 & 0.006 & 0.101 & 0.052 & 0.167 & 0.011 \\
SMOTE\_PSO & 0.373 & 0.034 & 0.005 & 0.038 & 0.022 & -- \\
SMOTE\_PSOBAT & 0.000 & 0.005 & 0.000 & 0.043 & 0.000 & -- \\
SMOTE\_RSB & deg & deg & 0.296 & 0.000 & 0.083 & 0.000 \\
SMOTE\_TomekLinks & 0.000 & 0.012 & 0.162 & 0.131 & 0.190 & 0.043 \\
SN\_SMOTE & 0.000 & 0.003 & 0.127 & 0.090 & 0.127 & 0.036 \\
SOI\_CJ & 0.060 & 0.010 & 0.128 & 0.129 & deg & 0.028 \\
SOMO & deg & deg & deg & deg & deg & 0.000 \\
SPY & deg & deg & deg & deg & deg & deg \\
SSO & 0.877 & 0.004 & 0.569 & 0.361 & 0.270 & -- \\
SUNDO & 1.000 & 1.000 & 1.000 & deg & 1.000 & 0.998 \\
SVM\_balance & 0.000 & 0.010 & 0.179 & 0.095 & 0.167 & 0.043 \\
SYMPROD & deg & deg & 0.025 & 0.362 & deg & 0.000 \\
Safe\_Level\_SMOTE & 0.135 & 0.635 & 0.577 & 0.332 & 0.648 & 0.072 \\
Selected\_SMOTE & 0.000 & 0.231 & 0.214 & 0.004 & 0.345 & 0.000 \\
Stefanowski & 0.000 & 0.000 & 0.000 & 0.000 & 0.000 & 0.000 \\
Supervised\_SMOTE & 0.001 & 0.052 & 0.099 & 0.336 & 0.027 & -- \\
TRIM\_SMOTE & 0.000 & 0.008 & 0.434 & 0.082 & 0.185 & 0.012 \\
VIS\_RST & 0.064 & deg & deg & deg & deg & 0.031 \\
V\_SYNTH & 1.000 & 0.843 & 0.764 & 0.625 & 0.858 & deg \\
cluster\_SMOTE & 0.000 & 0.005 & 0.116 & 0.095 & 0.088 & 0.074 \\
distance\_SMOTE & 0.001 & 0.100 & 0.238 & 0.155 & 0.291 & 0.054 \\
kmeans\_SMOTE & deg & deg & 0.043 & 0.151 & 0.439 & deg \\
polynom\_fit\_SMOTE\_bus & 0.000 & 0.186 & 0.435 & 0.509 & 0.275 & 0.051 \\
polynom\_fit\_SMOTE\_mesh & 0.004 & 0.370 & 0.636 & 0.448 & 0.430 & 0.085 \\
polynom\_fit\_SMOTE\_poly & 0.028 & 0.926 & 0.824 & 0.763 & 0.782 & 0.000 \\
polynom\_fit\_SMOTE\_star & 0.005 & 0.387 & 0.579 & 0.720 & 0.523 & 0.008 \\

%% file: si_full_benchf1.tex
ADASYN & 0.946 & 0.310 & 0.399 & 0.670 & 0.242 & 0.152 \\
ADG & 0.946 & 0.313 & 0.105 & 0.685 & 0.157 & -- \\
ADOMS & 0.952 & 0.316 & 0.410 & 0.667 & 0.233 & 0.422 \\
AHC & 0.942 & 0.216 & 0.241 & 0.674 & 0.143 & 0.856 \\
AMSCO & 0.950 & 0.263 & 0.392 & 0.704 & 0.243 & -- \\
AND\_SMOTE & 0.946 & 0.319 & 0.417 & 0.674 & 0.138 & 0.724 \\
ANS & 0.945 & 0.315 & 0.370 & 0.659 & 0.143 & 0.721 \\
ASMOBD & 0.946 & 0.197 & 0.281 & 0.674 & 0.129 & 0.882 \\
A\_SUWO & 0.938 & 0.319 & 0.406 & 0.674 & 0.145 & 0.820 \\
Assembled\_SMOTE & 0.947 & 0.316 & 0.405 & 0.663 & 0.254 & 0.418 \\
Borderline\_SMOTE1 & 0.935 & 0.317 & 0.406 & 0.663 & 0.182 & 0.688 \\
Borderline\_SMOTE2 & 0.908 & 0.326 & 0.398 & 0.698 & 0.167 & 0.626 \\
CBSO & 0.948 & 0.317 & 0.392 & 0.700 & 0.246 & 0.179 \\
CCR & 0.941 & 0.318 & 0.396 & 0.692 & 0.257 & 0.424 \\
CE\_SMOTE & 0.952 & 0.317 & 0.412 & 0.663 & 0.226 & 0.428 \\
CURE\_SMOTE & 0.947 & 0.317 & 0.307 & 0.630 & 0.214 & 0.869 \\
DBSMOTE & 0.950 & 0.229 & 0.324 & 0.654 & 0.127 & 0.882 \\
DEAGO & 0.940 & 0.124 & 0.388 & 0.691 & 0.346 & -- \\
DE\_oversampling & 0.943 & 0.321 & 0.401 & 0.697 & 0.279 & 0.415 \\
DSMOTE & 0.901 & 0.108 & 0.317 & 0.623 & 0.145 & 0.738 \\
DSRBF & 0.948 & 0.315 & 0.401 & 0.682 & 0.226 & -- \\
E\_SMOTE & lib & lib & lib & lib & lib & -- \\
Edge\_Det\_SMOTE & 0.946 & 0.313 & 0.421 & 0.682 & 0.215 & 0.556 \\
GASMOTE & 0.947 & 0.304 & 0.408 & 0.682 & 0.145 & -- \\
G\_SMOTE & 0.949 & 0.323 & 0.403 & 0.659 & 0.230 & 0.425 \\
Gaussian\_SMOTE & 0.899 & 0.308 & 0.406 & 0.696 & 0.176 & 0.468 \\
Gazzah & 0.871 & 0.085 & 0.381 & 0.681 & 0.301 & 0.548 \\
IPADE\_ID & 0.849 & 0.180 & 0.355 & 0.674 & 0.164 & -- \\
ISMOTE & 0.819 & 0.284 & 0.405 & 0.684 & 0.317 & 0.163 \\
ISOMAP\_Hybrid & lib & lib & lib & lib & lib & lib \\
KernelADASYN & 0.930 & 0.108 & 0.394 & 0.688 & 0.314 & -- \\
LLE\_SMOTE & 0.942 & 0.327 & 0.422 & 0.671 & 0.281 & 0.374 \\
LN\_SMOTE & 0.948 & 0.321 & 0.416 & 0.674 & 0.148 & 0.738 \\
LVQ\_SMOTE & 0.938 & 0.178 & 0.400 & 0.667 & 0.197 & 0.454 \\
Lee & 0.951 & 0.309 & 0.404 & 0.674 & 0.122 & 0.732 \\
MCT & 0.954 & 0.321 & 0.401 & 0.670 & 0.179 & 0.434 \\
MDO & 0.898 & 0.286 & 0.388 & 0.639 & 0.038 & 0.164 \\
MOT2LD & x & x & x & x & x & -- \\
MSMOTE & 0.896 & 0.319 & 0.407 & 0.670 & 0.140 & 0.722 \\
MSYN & 0.951 & 0.309 & 0.411 & 0.663 & 0.250 & 0.865 \\
MWMOTE & 0.947 & 0.314 & 0.383 & 0.663 & 0.265 & 0.167 \\
NDO\_sampling & 0.912 & 0.315 & 0.411 & 0.663 & 0.188 & 0.505 \\
NEATER & 0.922 & 0.288 & 0.370 & 0.670 & 0.267 & -- \\
NRAS & 0.884 & 0.241 & 0.396 & 0.627 & 0.074 & 0.816 \\
NRSBoundary\_SMOTE & 0.947 & 0.321 & 0.412 & 0.663 & 0.191 & 0.197 \\
NT\_SMOTE & 0.943 & 0.312 & 0.405 & 0.674 & 0.167 & 0.408 \\
OUPS & 0.944 & 0.316 & 0.415 & 0.670 & 0.254 & 0.420 \\
PDFOS & 0.945 & 0.318 & 0.401 & 0.663 & 0.269 & 0.161 \\
ProWSyn & 0.947 & 0.324 & 0.410 & 0.674 & 0.218 & 0.435 \\
ROSE & 0.935 & 0.318 & 0.404 & 0.655 & 0.246 & 0.487 \\
RWO\_sampling & 0.952 & 0.313 & 0.403 & 0.674 & 0.233 & 0.439 \\
Random\_SMOTE & 0.954 & 0.315 & 0.401 & 0.670 & 0.188 & 0.436 \\
SDSMOTE & 0.952 & 0.316 & 0.404 & 0.689 & 0.226 & 0.423 \\
SL\_graph\_SMOTE & 0.908 & 0.317 & 0.406 & 0.663 & 0.121 & 0.547 \\
SMMO & 0.946 & 0.097 & 0.358 & 0.673 & 0.000 & 0.130 \\
SMOBD & 0.946 & 0.097 & 0.058 & 0.629 & 0.000 & 0.877 \\
SMOTE & 0.952 & 0.313 & 0.404 & 0.689 & 0.219 & 0.423 \\
SMOTEWB & 0.950 & 0.324 & 0.413 & 0.674 & 0.254 & 0.437 \\
SMOTE\_AMSR & 0.950 & 0.324 & 0.403 & 0.685 & 0.194 & 0.624 \\
SMOTE\_Cosine & 0.952 & 0.322 & 0.413 & 0.670 & 0.267 & 0.643 \\
SMOTE\_D & 0.947 & 0.306 & 0.425 & 0.682 & 0.230 & 0.606 \\
SMOTE\_ENN & 0.946 & 0.097 & 0.058 & 0.629 & 0.000 & 0.877 \\
SMOTE\_FRST\_2T & 0.950 & 0.301 & 0.396 & 0.659 & 0.145 & -- \\
SMOTE\_IPF & 0.948 & 0.319 & 0.404 & 0.663 & 0.226 & 0.423 \\
SMOTE\_OUT & 0.947 & 0.316 & 0.406 & 0.667 & 0.230 & 0.422 \\
SMOTE\_PSO & 0.954 & 0.261 & 0.310 & 0.685 & 0.278 & -- \\
SMOTE\_PSOBAT & 0.946 & 0.146 & 0.406 & 0.667 & 0.225 & -- \\
SMOTE\_RSB & 0.950 & 0.097 & 0.080 & 0.615 & 0.000 & 0.874 \\
SMOTE\_TomekLinks & 0.948 & 0.312 & 0.414 & 0.701 & 0.197 & 0.423 \\
SN\_SMOTE & 0.947 & 0.324 & 0.405 & 0.659 & 0.254 & 0.412 \\
SOI\_CJ & 0.954 & 0.205 & 0.294 & 0.628 & 0.000 & 0.878 \\
SOMO & 0.942 & 0.103 & 0.252 & 0.671 & 0.121 & 0.857 \\
SPY & 0.884 & 0.135 & 0.154 & 0.692 & 0.062 & 0.857 \\
SSO & 0.937 & 0.309 & 0.394 & 0.638 & 0.191 & -- \\
SUNDO & 0.941 & 0.281 & 0.421 & 0.623 & 0.219 & 0.632 \\
SVM\_balance & 0.950 & 0.314 & 0.389 & 0.702 & 0.244 & 0.401 \\
SYMPROD & 0.946 & 0.097 & 0.294 & 0.658 & 0.046 & 0.875 \\
Safe\_Level\_SMOTE & 0.931 & 0.321 & 0.418 & 0.674 & 0.226 & 0.547 \\
Selected\_SMOTE & 0.944 & 0.317 & 0.398 & 0.670 & 0.219 & 0.438 \\
Stefanowski & 0.956 & 0.277 & 0.348 & 0.670 & 0.269 & 0.839 \\
Supervised\_SMOTE & 0.951 & 0.317 & 0.407 & 0.682 & 0.291 & -- \\
TRIM\_SMOTE & 0.941 & 0.318 & 0.232 & 0.674 & 0.164 & 0.508 \\
VIS\_RST & 0.835 & 0.097 & 0.058 & 0.629 & 0.000 & 0.455 \\
V\_SYNTH & 0.674 & 0.317 & 0.403 & 0.652 & 0.308 & 0.335 \\
cluster\_SMOTE & 0.944 & 0.315 & 0.419 & 0.663 & 0.231 & 0.562 \\
distance\_SMOTE & 0.943 & 0.320 & 0.407 & 0.659 & 0.203 & 0.420 \\
kmeans\_SMOTE & 0.945 & 0.097 & 0.131 & 0.640 & 0.121 & 0.875 \\
polynom\_fit\_SMOTE\_bus & 0.946 & 0.318 & 0.405 & 0.678 & 0.230 & 0.645 \\
polynom\_fit\_SMOTE\_mesh & 0.947 & 0.316 & 0.412 & 0.674 & 0.200 & 0.630 \\
polynom\_fit\_SMOTE\_poly & 0.939 & 0.316 & 0.417 & 0.690 & 0.218 & 0.866 \\
polynom\_fit\_SMOTE\_star & 0.941 & 0.285 & 0.406 & 0.682 & 0.258 & 0.834 \\

%% file: si_full_df1.tex
ADASYN & -0.004 & -0.007 & -0.023 & -0.008 & +0.004 & -0.725 \\
ADG & -0.004 & -0.004 & -0.317 & +0.007 & -0.082 & -- \\
ADOMS & +0.002 & -0.001 & -0.012 & -0.012 & -0.006 & -0.455 \\
AHC & -0.008 & -0.101 & -0.181 & -0.004 & -0.096 & -0.021 \\
AMSCO & 0.000 & -0.054 & -0.030 & +0.026 & +0.004 & -- \\
AND\_SMOTE & -0.004 & +0.002 & -0.005 & -0.004 & -0.101 & -0.153 \\
ANS & -0.005 & -0.002 & -0.052 & -0.019 & -0.096 & -0.156 \\
ASMOBD & -0.004 & -0.120 & -0.141 & -0.004 & -0.110 & +0.005 \\
A\_SUWO & -0.012 & +0.002 & -0.016 & -0.004 & -0.093 & -0.057 \\
Assembled\_SMOTE & -0.002 & -0.001 & -0.017 & -0.015 & +0.015 & -0.459 \\
Borderline\_SMOTE1 & -0.014 & -0.001 & -0.015 & -0.015 & -0.057 & -0.189 \\
Borderline\_SMOTE2 & -0.041 & +0.009 & -0.024 & +0.020 & -0.072 & -0.251 \\
CBSO & -0.002 & 0.000 & -0.029 & +0.021 & +0.007 & -0.698 \\
CCR & -0.009 & +0.001 & -0.025 & +0.014 & +0.018 & -0.453 \\
CE\_SMOTE & +0.002 & 0.000 & -0.010 & -0.015 & -0.013 & -0.449 \\
CURE\_SMOTE & -0.003 & -0.001 & -0.115 & -0.048 & -0.025 & -0.008 \\
DBSMOTE & 0.000 & -0.088 & -0.097 & -0.024 & -0.112 & +0.005 \\
DEAGO & -0.010 & -0.193 & -0.034 & +0.012 & +0.107 & -- \\
DE\_oversampling & -0.007 & +0.004 & -0.020 & +0.018 & +0.040 & -0.462 \\
DSMOTE & -0.049 & -0.209 & -0.105 & -0.056 & -0.093 & -0.139 \\
DSRBF & -0.002 & -0.002 & -0.021 & +0.004 & -0.013 & -- \\
E\_SMOTE & lib & lib & lib & lib & lib & -- \\
Edge\_Det\_SMOTE & -0.004 & -0.004 & -0.001 & +0.004 & -0.023 & -0.321 \\
GASMOTE & -0.003 & -0.013 & -0.014 & +0.004 & -0.093 & -- \\
G\_SMOTE & -0.001 & +0.005 & -0.019 & -0.019 & -0.009 & -0.451 \\
Gaussian\_SMOTE & -0.051 & -0.009 & -0.016 & +0.018 & -0.062 & -0.409 \\
Gazzah & -0.079 & -0.232 & -0.041 & +0.003 & +0.063 & -0.329 \\
IPADE\_ID & -0.101 & -0.137 & -0.067 & -0.004 & -0.075 & -- \\
ISMOTE & -0.131 & -0.033 & -0.016 & +0.006 & +0.078 & -0.713 \\
ISOMAP\_Hybrid & lib & lib & lib & lib & lib & lib \\
KernelADASYN & -0.020 & -0.209 & -0.028 & +0.009 & +0.076 & -- \\
LLE\_SMOTE & -0.007 & +0.009 & +0.001 & -0.008 & +0.042 & -0.503 \\
LN\_SMOTE & -0.002 & +0.004 & -0.006 & -0.004 & -0.091 & -0.138 \\
LVQ\_SMOTE & -0.012 & -0.139 & -0.022 & -0.012 & -0.042 & -0.423 \\
Lee & +0.001 & -0.008 & -0.017 & -0.004 & -0.116 & -0.145 \\
MCT & +0.004 & +0.004 & -0.020 & -0.008 & -0.060 & -0.443 \\
MDO & -0.052 & -0.031 & -0.033 & -0.039 & -0.200 & -0.713 \\
MOT2LD & x & x & x & x & x & -- \\
MSMOTE & -0.054 & +0.002 & -0.015 & -0.008 & -0.098 & -0.155 \\
MSYN & +0.001 & -0.008 & -0.010 & -0.015 & +0.011 & -0.012 \\
MWMOTE & -0.002 & -0.003 & -0.039 & -0.015 & +0.026 & -0.710 \\
NDO\_sampling & -0.037 & -0.002 & -0.010 & -0.015 & -0.051 & -0.372 \\
NEATER & -0.028 & -0.029 & -0.052 & -0.008 & +0.028 & -- \\
NRAS & -0.066 & -0.076 & -0.026 & -0.051 & -0.165 & -0.061 \\
NRSBoundary\_SMOTE & -0.002 & +0.004 & -0.010 & -0.015 & -0.048 & -0.679 \\
NT\_SMOTE & -0.007 & -0.005 & -0.016 & -0.004 & -0.072 & -0.468 \\
OUPS & -0.006 & -0.001 & -0.006 & -0.008 & +0.015 & -0.457 \\
PDFOS & -0.005 & +0.001 & -0.021 & -0.015 & +0.030 & -0.716 \\
ProWSyn & -0.002 & +0.007 & -0.011 & -0.004 & -0.021 & -0.441 \\
ROSE & -0.015 & +0.001 & -0.017 & -0.023 & +0.007 & -0.390 \\
RWO\_sampling & +0.002 & -0.004 & -0.019 & -0.004 & -0.006 & -0.438 \\
Random\_SMOTE & +0.004 & -0.002 & -0.021 & -0.008 & -0.051 & -0.441 \\
SDSMOTE & +0.002 & -0.002 & -0.017 & +0.011 & -0.013 & -0.454 \\
SL\_graph\_SMOTE & -0.041 & -0.001 & -0.015 & -0.015 & -0.118 & -0.330 \\
SMMO & -0.004 & -0.220 & -0.064 & -0.005 & -0.239 & -0.747 \\
SMOBD & -0.004 & -0.220 & -0.364 & -0.049 & -0.239 & 0.000 \\
SMOTE & +0.002 & -0.004 & -0.017 & +0.011 & -0.020 & -0.454 \\
SMOTEWB & 0.000 & +0.007 & -0.009 & -0.004 & +0.015 & -0.440 \\
SMOTE\_AMSR & 0.000 & +0.006 & -0.018 & +0.007 & -0.045 & -0.252 \\
SMOTE\_Cosine & +0.002 & +0.005 & -0.008 & -0.008 & +0.028 & -0.233 \\
SMOTE\_D & -0.003 & -0.011 & +0.004 & +0.004 & -0.009 & -0.271 \\
SMOTE\_ENN & -0.004 & -0.220 & -0.364 & -0.049 & -0.239 & 0.000 \\
SMOTE\_FRST\_2T & 0.000 & -0.016 & -0.026 & -0.019 & -0.094 & -- \\
SMOTE\_IPF & -0.002 & +0.002 & -0.017 & -0.015 & -0.013 & -0.454 \\
SMOTE\_OUT & -0.003 & -0.001 & -0.016 & -0.012 & -0.009 & -0.455 \\
SMOTE\_PSO & +0.004 & -0.056 & -0.111 & +0.007 & +0.039 & -- \\
SMOTE\_PSOBAT & -0.004 & -0.171 & -0.016 & -0.012 & -0.014 & -- \\
SMOTE\_RSB & +0.001 & -0.220 & -0.342 & -0.063 & -0.239 & -0.003 \\
SMOTE\_TomekLinks & -0.002 & -0.005 & -0.008 & +0.023 & -0.042 & -0.454 \\
SN\_SMOTE & -0.002 & +0.007 & -0.017 & -0.019 & +0.015 & -0.465 \\
SOI\_CJ & +0.004 & -0.112 & -0.128 & -0.050 & -0.239 & +0.001 \\
SOMO & -0.007 & -0.214 & -0.170 & -0.008 & -0.118 & -0.020 \\
SPY & -0.066 & -0.182 & -0.268 & +0.014 & -0.176 & -0.020 \\
SSO & -0.013 & -0.008 & -0.028 & -0.040 & -0.048 & -- \\
SUNDO & -0.009 & -0.036 & 0.000 & -0.055 & -0.020 & -0.244 \\
SVM\_balance & 0.000 & -0.004 & -0.033 & +0.024 & +0.005 & -0.476 \\
SYMPROD & -0.004 & -0.220 & -0.128 & -0.020 & -0.192 & -0.002 \\
Safe\_Level\_SMOTE & -0.018 & +0.004 & -0.003 & -0.004 & -0.013 & -0.330 \\
Selected\_SMOTE & -0.005 & 0.000 & -0.024 & -0.008 & -0.020 & -0.439 \\
Stefanowski & +0.006 & -0.041 & -0.073 & -0.008 & +0.030 & -0.038 \\
Supervised\_SMOTE & +0.001 & 0.000 & -0.014 & +0.004 & +0.052 & -- \\
TRIM\_SMOTE & -0.009 & +0.001 & -0.190 & -0.004 & -0.075 & -0.369 \\
VIS\_RST & -0.115 & -0.220 & -0.364 & -0.049 & -0.239 & -0.422 \\
V\_SYNTH & -0.276 & 0.000 & -0.019 & -0.026 & +0.069 & -0.542 \\
cluster\_SMOTE & -0.006 & -0.002 & -0.002 & -0.015 & -0.008 & -0.315 \\
distance\_SMOTE & -0.007 & +0.003 & -0.014 & -0.019 & -0.035 & -0.457 \\
kmeans\_SMOTE & -0.005 & -0.220 & -0.291 & -0.038 & -0.118 & -0.002 \\
polynom\_fit\_SMOTE\_bus & -0.004 & +0.001 & -0.017 & 0.000 & -0.009 & -0.232 \\
polynom\_fit\_SMOTE\_mesh & -0.002 & -0.002 & -0.010 & -0.004 & -0.039 & -0.246 \\
polynom\_fit\_SMOTE\_poly & -0.011 & -0.001 & -0.005 & +0.012 & -0.021 & -0.011 \\
polynom\_fit\_SMOTE\_star & -0.009 & -0.032 & -0.015 & +0.004 & +0.019 & -0.042 \\

%% file: si_full_dauprc.tex
ADASYN & -0.004 & -0.010 & -0.023 & 0.000 & -0.012 & -0.027 \\
ADG & -0.003 & -0.016 & -0.176 & -0.010 & -0.017 & -- \\
ADOMS & +0.001 & -0.007 & -0.017 & -0.009 & -0.023 & -0.018 \\
AHC & -0.003 & +0.001 & -0.005 & -0.002 & -0.011 & -0.002 \\
AMSCO & -0.001 & -0.013 & -0.040 & +0.002 & -0.026 & -- \\
AND\_SMOTE & -0.001 & -0.006 & -0.025 & -0.003 & -0.033 & -0.014 \\
ANS & -0.002 & +0.001 & -0.063 & -0.008 & -0.048 & +0.001 \\
ASMOBD & -0.003 & -0.095 & -0.131 & +0.009 & -0.043 & 0.000 \\
A\_SUWO & -0.008 & -0.007 & -0.023 & -0.001 & -0.042 & -0.032 \\
Assembled\_SMOTE & -0.001 & -0.006 & -0.032 & +0.003 & -0.015 & 0.000 \\
Borderline\_SMOTE1 & -0.005 & -0.006 & -0.035 & -0.008 & -0.046 & -0.029 \\
Borderline\_SMOTE2 & -0.012 & -0.005 & -0.034 & +0.004 & -0.043 & -0.022 \\
CBSO & -0.005 & -0.005 & -0.008 & 0.000 & +0.006 & -0.018 \\
CCR & +0.001 & -0.004 & -0.007 & -0.004 & -0.006 & -0.012 \\
CE\_SMOTE & -0.002 & -0.007 & -0.028 & +0.002 & -0.025 & -0.001 \\
CURE\_SMOTE & -0.004 & -0.006 & -0.015 & +0.002 & -0.025 & +0.003 \\
DBSMOTE & -0.001 & -0.089 & -0.107 & -0.029 & -0.045 & +0.001 \\
DEAGO & -0.008 & 0.000 & -0.008 & +0.005 & +0.049 & -- \\
DE\_oversampling & +0.001 & -0.008 & -0.003 & +0.008 & +0.015 & -0.038 \\
DSMOTE & -0.012 & -0.006 & -0.054 & -0.015 & -0.037 & -0.020 \\
DSRBF & -0.003 & -0.013 & -0.017 & -0.004 & -0.013 & -- \\
E\_SMOTE & lib & lib & lib & lib & lib & -- \\
Edge\_Det\_SMOTE & -0.002 & -0.012 & -0.024 & +0.002 & -0.016 & -0.002 \\
GASMOTE & -0.003 & -0.013 & -0.010 & -0.012 & -0.036 & -- \\
G\_SMOTE & -0.003 & -0.006 & -0.010 & -0.004 & -0.021 & -0.001 \\
Gaussian\_SMOTE & +0.006 & -0.019 & -0.005 & +0.011 & -0.032 & -0.014 \\
Gazzah & -0.051 & -0.072 & -0.046 & -0.038 & +0.035 & -0.061 \\
IPADE\_ID & -0.049 & -0.037 & -0.068 & -0.003 & -0.043 & -- \\
ISMOTE & -0.058 & -0.052 & -0.013 & -0.003 & +0.055 & -0.432 \\
ISOMAP\_Hybrid & lib & lib & lib & lib & lib & lib \\
KernelADASYN & -0.005 & -0.063 & -0.033 & -0.011 & +0.009 & -- \\
LLE\_SMOTE & -0.004 & -0.020 & +0.015 & +0.005 & +0.009 & -0.013 \\
LN\_SMOTE & -0.001 & -0.005 & -0.020 & 0.000 & -0.036 & -0.010 \\
LVQ\_SMOTE & -0.008 & -0.064 & -0.008 & -0.006 & -0.026 & -0.080 \\
Lee & -0.001 & -0.018 & -0.021 & +0.001 & -0.034 & -0.010 \\
MCT & -0.001 & +0.001 & +0.005 & -0.001 & -0.024 & -0.001 \\
MDO & -0.023 & -0.048 & -0.057 & +0.005 & -0.063 & -0.309 \\
MOT2LD & x & x & x & x & x & -- \\
MSMOTE & -0.026 & -0.004 & -0.021 & +0.009 & -0.026 & -0.012 \\
MSYN & -0.003 & -0.011 & +0.027 & -0.004 & -0.020 & +0.006 \\
MWMOTE & -0.003 & -0.002 & -0.047 & -0.001 & -0.027 & -0.018 \\
NDO\_sampling & -0.008 & -0.011 & -0.036 & -0.010 & -0.007 & -0.005 \\
NEATER & -0.051 & -0.012 & -0.032 & -0.003 & -0.021 & -- \\
NRAS & -0.015 & -0.073 & -0.031 & +0.004 & -0.035 & -0.007 \\
NRSBoundary\_SMOTE & -0.006 & -0.009 & -0.021 & -0.003 & -0.027 & -0.028 \\
NT\_SMOTE & -0.003 & -0.018 & -0.025 & +0.001 & -0.038 & +0.001 \\
OUPS & -0.003 & -0.002 & -0.008 & +0.003 & -0.001 & -0.001 \\
PDFOS & -0.003 & -0.023 & -0.033 & +0.006 & -0.014 & -0.380 \\
ProWSyn & -0.004 & +0.002 & +0.015 & -0.005 & -0.013 & 0.000 \\
ROSE & +0.001 & -0.012 & -0.017 & -0.011 & -0.014 & -0.024 \\
RWO\_sampling & 0.000 & -0.005 & +0.005 & -0.001 & -0.010 & -0.003 \\
Random\_SMOTE & +0.002 & -0.008 & -0.026 & +0.001 & -0.037 & -0.002 \\
SDSMOTE & -0.002 & -0.010 & -0.020 & +0.004 & -0.015 & +0.001 \\
SL\_graph\_SMOTE & -0.006 & -0.006 & -0.035 & -0.002 & -0.052 & -0.022 \\
SMMO & -0.003 & 0.000 & -0.070 & -0.017 & 0.000 & -0.049 \\
SMOBD & -0.003 & 0.000 & 0.000 & 0.000 & 0.000 & 0.000 \\
SMOTE & -0.003 & -0.007 & -0.020 & +0.003 & -0.029 & +0.001 \\
SMOTEWB & -0.002 & -0.001 & -0.036 & -0.001 & -0.011 & -0.002 \\
SMOTE\_AMSR & +0.002 & -0.005 & -0.015 & -0.002 & -0.026 & -0.008 \\
SMOTE\_Cosine & 0.000 & +0.002 & -0.003 & 0.000 & -0.006 & +0.002 \\
SMOTE\_D & -0.004 & -0.019 & +0.022 & +0.004 & -0.036 & -0.003 \\
SMOTE\_ENN & -0.003 & 0.000 & 0.000 & 0.000 & 0.000 & 0.000 \\
SMOTE\_FRST\_2T & -0.001 & -0.036 & -0.022 & -0.003 & -0.045 & -- \\
SMOTE\_IPF & -0.002 & -0.009 & -0.020 & -0.006 & -0.020 & +0.001 \\
SMOTE\_OUT & -0.001 & -0.012 & -0.018 & -0.003 & -0.034 & -0.001 \\
SMOTE\_PSO & -0.004 & -0.015 & -0.033 & -0.001 & -0.004 & -- \\
SMOTE\_PSOBAT & -0.003 & -0.002 & +0.002 & +0.003 & -0.027 & -- \\
SMOTE\_RSB & -0.003 & 0.000 & +0.006 & 0.000 & +0.015 & 0.000 \\
SMOTE\_TomekLinks & -0.002 & -0.008 & -0.018 & +0.005 & -0.032 & +0.001 \\
SN\_SMOTE & -0.002 & -0.011 & -0.011 & -0.002 & -0.022 & 0.000 \\
SOI\_CJ & -0.003 & -0.063 & -0.031 & -0.006 & 0.000 & -0.003 \\
SOMO & -0.005 & -0.006 & -0.086 & -0.003 & +0.034 & -0.004 \\
SPY & -0.027 & 0.000 & -0.002 & +0.010 & +0.005 & +0.003 \\
SSO & +0.004 & -0.014 & -0.007 & -0.002 & +0.033 & -- \\
SUNDO & +0.002 & -0.050 & -0.030 & -0.015 & -0.035 & -0.017 \\
SVM\_balance & -0.003 & -0.008 & -0.049 & +0.011 & -0.030 & +0.002 \\
SYMPROD & -0.003 & 0.000 & -0.058 & -0.001 & -0.036 & +0.002 \\
Safe\_Level\_SMOTE & -0.002 & -0.005 & +0.012 & -0.004 & -0.023 & -0.023 \\
Selected\_SMOTE & +0.001 & -0.010 & -0.021 & +0.003 & -0.027 & 0.000 \\
Stefanowski & -0.005 & -0.002 & -0.008 & -0.011 & +0.028 & +0.004 \\
Supervised\_SMOTE & -0.001 & -0.003 & +0.012 & -0.001 & -0.001 & -- \\
TRIM\_SMOTE & -0.002 & -0.018 & -0.193 & -0.003 & -0.041 & -0.003 \\
VIS\_RST & -0.075 & 0.000 & 0.000 & 0.000 & 0.000 & -0.018 \\
V\_SYNTH & -0.018 & -0.002 & +0.007 & -0.013 & +0.041 & -0.068 \\
cluster\_SMOTE & -0.003 & -0.001 & -0.003 & +0.003 & -0.025 & +0.001 \\
distance\_SMOTE & -0.003 & -0.010 & -0.022 & -0.004 & -0.035 & +0.001 \\
kmeans\_SMOTE & -0.004 & 0.000 & -0.014 & +0.004 & +0.031 & -0.002 \\
polynom\_fit\_SMOTE\_bus & -0.003 & 0.000 & +0.010 & -0.004 & -0.006 & +0.002 \\
polynom\_fit\_SMOTE\_mesh & -0.003 & -0.003 & +0.015 & +0.002 & -0.009 & +0.002 \\
polynom\_fit\_SMOTE\_poly & -0.007 & -0.007 & -0.009 & +0.002 & +0.023 & -0.006 \\
polynom\_fit\_SMOTE\_star & -0.004 & +0.002 & +0.007 & -0.001 & +0.005 & +0.004 \\

%% file: si_full_dauroc.tex
ADASYN & 0.000 & -0.013 & -0.003 & -0.001 & -0.026 & -0.022 \\
ADG & 0.000 & -0.018 & -0.267 & -0.003 & -0.069 & -- \\
ADOMS & 0.000 & -0.005 & -0.005 & -0.005 & -0.041 & +0.001 \\
AHC & 0.000 & -0.005 & -0.003 & -0.002 & -0.022 & -0.001 \\
AMSCO & 0.000 & -0.014 & -0.017 & +0.001 & -0.058 & -- \\
AND\_SMOTE & 0.000 & -0.015 & -0.004 & -0.003 & -0.075 & -0.017 \\
ANS & 0.000 & -0.006 & -0.048 & -0.006 & -0.131 & -0.001 \\
ASMOBD & 0.000 & -0.112 & -0.129 & +0.004 & -0.151 & 0.000 \\
A\_SUWO & -0.001 & -0.013 & -0.032 & +0.001 & -0.119 & -0.026 \\
Assembled\_SMOTE & 0.000 & -0.008 & -0.006 & 0.000 & -0.020 & -0.001 \\
Borderline\_SMOTE1 & 0.000 & -0.016 & -0.016 & -0.007 & -0.125 & -0.023 \\
Borderline\_SMOTE2 & -0.001 & -0.010 & -0.019 & +0.005 & -0.110 & -0.022 \\
CBSO & 0.000 & 0.000 & -0.001 & +0.003 & +0.001 & -0.008 \\
CCR & 0.000 & +0.001 & 0.000 & +0.001 & -0.016 & +0.001 \\
CE\_SMOTE & 0.000 & -0.014 & -0.005 & -0.001 & -0.060 & -0.002 \\
CURE\_SMOTE & 0.000 & -0.006 & -0.019 & -0.008 & -0.059 & -0.001 \\
DBSMOTE & 0.000 & -0.101 & -0.123 & -0.005 & -0.154 & 0.000 \\
DEAGO & -0.004 & -0.003 & -0.020 & +0.010 & +0.077 & -- \\
DE\_oversampling & 0.000 & -0.003 & -0.001 & +0.006 & +0.028 & -0.012 \\
DSMOTE & -0.001 & -0.005 & -0.029 & -0.002 & -0.155 & -0.016 \\
DSRBF & 0.000 & -0.014 & -0.002 & -0.002 & -0.020 & -- \\
E\_SMOTE & lib & lib & lib & lib & lib & -- \\
Edge\_Det\_SMOTE & 0.000 & -0.017 & -0.004 & +0.002 & -0.047 & +0.002 \\
GASMOTE & 0.000 & -0.017 & 0.000 & -0.010 & -0.088 & -- \\
G\_SMOTE & 0.000 & -0.013 & +0.001 & -0.005 & -0.050 & -0.005 \\
Gaussian\_SMOTE & 0.000 & -0.019 & -0.003 & +0.009 & -0.093 & +0.004 \\
Gazzah & -0.003 & -0.031 & -0.031 & -0.004 & +0.043 & -0.005 \\
IPADE\_ID & -0.004 & -0.042 & -0.040 & +0.004 & -0.117 & -- \\
ISMOTE & -0.003 & -0.047 & -0.023 & +0.004 & +0.066 & -0.010 \\
ISOMAP\_Hybrid & lib & lib & lib & lib & lib & lib \\
KernelADASYN & 0.000 & -0.050 & -0.020 & +0.004 & +0.005 & -- \\
LLE\_SMOTE & 0.000 & -0.008 & +0.004 & 0.000 & +0.019 & -0.001 \\
LN\_SMOTE & 0.000 & -0.012 & -0.012 & +0.002 & -0.084 & -0.013 \\
LVQ\_SMOTE & -0.001 & -0.102 & -0.013 & -0.003 & -0.054 & -0.039 \\
Lee & 0.000 & -0.019 & -0.012 & +0.002 & -0.073 & -0.014 \\
MCT & 0.000 & +0.004 & +0.001 & +0.001 & -0.051 & 0.000 \\
MDO & -0.001 & -0.055 & -0.028 & +0.003 & -0.192 & -0.033 \\
MOT2LD & x & x & x & x & x & -- \\
MSMOTE & -0.001 & -0.012 & -0.013 & +0.004 & -0.064 & -0.016 \\
MSYN & -0.001 & -0.012 & +0.004 & -0.001 & -0.041 & 0.000 \\
MWMOTE & 0.000 & 0.000 & -0.029 & -0.004 & -0.056 & -0.008 \\
NDO\_sampling & 0.000 & -0.009 & -0.009 & -0.003 & -0.002 & -0.001 \\
NEATER & -0.002 & -0.011 & -0.005 & -0.003 & -0.037 & -- \\
NRAS & -0.001 & -0.086 & -0.021 & -0.001 & -0.132 & -0.009 \\
NRSBoundary\_SMOTE & 0.000 & -0.015 & -0.002 & -0.003 & -0.061 & -0.017 \\
NT\_SMOTE & 0.000 & -0.021 & -0.005 & 0.000 & -0.083 & -0.001 \\
OUPS & 0.000 & -0.007 & -0.004 & 0.000 & -0.010 & 0.000 \\
PDFOS & -0.001 & -0.008 & -0.006 & +0.005 & -0.033 & -0.025 \\
ProWSyn & 0.000 & -0.001 & -0.004 & +0.003 & -0.021 & 0.000 \\
ROSE & 0.000 & -0.003 & -0.004 & -0.006 & -0.064 & +0.004 \\
RWO\_sampling & 0.000 & -0.001 & +0.001 & +0.002 & -0.019 & 0.000 \\
Random\_SMOTE & 0.000 & -0.013 & -0.008 & 0.000 & -0.087 & +0.001 \\
SDSMOTE & 0.000 & -0.013 & -0.002 & +0.002 & -0.033 & 0.000 \\
SL\_graph\_SMOTE & 0.000 & -0.016 & -0.016 & +0.001 & -0.149 & -0.017 \\
SMMO & 0.000 & -0.003 & -0.060 & -0.015 & 0.000 & -0.025 \\
SMOBD & 0.000 & -0.003 & -0.002 & 0.000 & 0.000 & 0.000 \\
SMOTE & 0.000 & -0.014 & -0.002 & 0.000 & -0.068 & 0.000 \\
SMOTEWB & 0.000 & +0.005 & -0.023 & 0.000 & -0.026 & 0.000 \\
SMOTE\_AMSR & 0.000 & -0.008 & -0.002 & 0.000 & -0.054 & -0.001 \\
SMOTE\_Cosine & 0.000 & +0.002 & 0.000 & 0.000 & -0.016 & +0.001 \\
SMOTE\_D & 0.000 & -0.022 & +0.007 & +0.001 & -0.083 & -0.002 \\
SMOTE\_ENN & 0.000 & -0.003 & -0.002 & 0.000 & 0.000 & 0.000 \\
SMOTE\_FRST\_2T & 0.000 & -0.038 & -0.005 & -0.004 & -0.113 & -- \\
SMOTE\_IPF & 0.000 & -0.012 & -0.002 & -0.006 & -0.041 & 0.000 \\
SMOTE\_OUT & 0.000 & -0.022 & -0.001 & -0.003 & -0.074 & -0.001 \\
SMOTE\_PSO & 0.000 & -0.007 & -0.004 & +0.003 & -0.015 & -- \\
SMOTE\_PSOBAT & 0.000 & -0.008 & -0.001 & 0.000 & -0.056 & -- \\
SMOTE\_RSB & 0.000 & -0.003 & -0.002 & +0.001 & +0.032 & -0.001 \\
SMOTE\_TomekLinks & 0.000 & -0.010 & -0.005 & +0.003 & -0.078 & 0.000 \\
SN\_SMOTE & 0.000 & -0.013 & -0.002 & -0.003 & -0.045 & -0.001 \\
SOI\_CJ & 0.000 & -0.068 & -0.029 & -0.014 & 0.000 & 0.000 \\
SOMO & 0.000 & -0.002 & -0.101 & -0.004 & +0.007 & -0.004 \\
SPY & -0.002 & -0.003 & -0.004 & +0.004 & 0.000 & 0.000 \\
SSO & 0.000 & -0.008 & -0.012 & -0.004 & -0.040 & -- \\
SUNDO & 0.000 & -0.066 & -0.007 & -0.012 & -0.114 & +0.002 \\
SVM\_balance & 0.000 & -0.009 & -0.016 & +0.011 & -0.066 & 0.000 \\
SYMPROD & 0.000 & -0.003 & -0.039 & -0.005 & -0.087 & 0.000 \\
Safe\_Level\_SMOTE & 0.000 & -0.012 & +0.001 & -0.003 & -0.059 & -0.017 \\
Selected\_SMOTE & 0.000 & -0.012 & -0.006 & -0.001 & -0.054 & -0.002 \\
Stefanowski & 0.000 & +0.001 & -0.003 & -0.007 & +0.017 & -0.001 \\
Supervised\_SMOTE & 0.000 & +0.002 & -0.001 & 0.000 & +0.007 & -- \\
TRIM\_SMOTE & 0.000 & -0.018 & -0.238 & -0.003 & -0.095 & +0.002 \\
VIS\_RST & -0.004 & -0.003 & -0.002 & 0.000 & 0.000 & -0.027 \\
V\_SYNTH & -0.001 & -0.003 & -0.006 & -0.011 & +0.054 & -0.014 \\
cluster\_SMOTE & 0.000 & -0.012 & -0.001 & 0.000 & -0.063 & 0.000 \\
distance\_SMOTE & 0.000 & -0.011 & -0.008 & -0.003 & -0.084 & -0.001 \\
kmeans\_SMOTE & 0.000 & -0.003 & -0.007 & +0.002 & +0.012 & -0.001 \\
polynom\_fit\_SMOTE\_bus & 0.000 & 0.000 & 0.000 & -0.002 & -0.009 & -0.005 \\
polynom\_fit\_SMOTE\_mesh & 0.000 & 0.000 & +0.001 & 0.000 & -0.033 & -0.002 \\
polynom\_fit\_SMOTE\_poly & -0.001 & +0.001 & -0.003 & +0.003 & +0.011 & 0.000 \\
polynom\_fit\_SMOTE\_star & 0.000 & 0.000 & -0.002 & -0.001 & +0.007 & -0.002 \\